\newcommand{\q}{\mathbf{q}}
\newcommand{\qd}{{\dot{\q}}}
\newcommand{\qdd}{{\ddot{\q}}}
\newcommand{\vv}{\mathbf{v}}
\newcommand{\x}{\mathbf{x}}
\newcommand{\xd}{{\dot{\x}}}
\newcommand{\J}{\mathbf{J}}
\newcommand{\Jd}{{\dot{\J}}}
\newcommand{\I}{\mathbf{I}}
\newcommand{\M}{\mathbf{M}}
\theoremstyle{plain}
\newtheorem{theorem}{Theorem}[section]
\theoremstyle{definition}
\theoremstyle{remark}
\def\CC{\mathcal{C}}
\def\TT{\mathcal{T}}
\def\Jb{\mathbf{J}}\def\Lb{\mathbf{L}}
\def\ab{\mathbf{a}}
\def\Rbb{\mathbb{R}}
\def\R{\Rbb}
\def\t{\top}
\def\*{\star}
\title{Imitation Learning via Simultaneous Optimization of Policies and Auxiliary Trajectories}
\author{%
  Mandy Xie$^{1, 2}$\thanks{\texttt{manxie@gatech.edu}},
  Anqi Li$^{3}$\thanks{\texttt{anqil4@cs.washington.edu}},
  Karl Van Wyk$^{1}$\thanks{\texttt{kvanwyk@nvidia.com}},
  Frank Dellaert$^{2}$,
  Byron Boots$^{1, 3}$,
  Nathan Ratliff$^{1}$ \\
  $^{1}$ NVIDIA, 
  $^{2}$ Georgia Institute of Technology,
  $^{3}$ University of Washington
}
\begin{document}

\maketitle
\vspace{-7mm}
\begin{abstract}
  Imitation learning (IL) is a frequently used approach for data-efficient policy learning. Many IL methods, such as Dataset Aggregation (DAgger), combat challenges like distributional shift 
  by interacting with oracular experts. Unfortunately, assuming access to oracular experts is often unrealistic in practice; data used in IL frequently comes from offline processes such as lead-through or teleoperation. In this paper, we present a novel imitation learning technique called Collocation for Demonstration Encoding (CoDE) that operates on only a fixed set of trajectory demonstrations. We circumvent challenges with methods like back-propagation-through-time by introducing an auxiliary trajectory network, which takes inspiration from collocation techniques in optimal control. Our method generalizes well and more accurately reproduces the demonstrated behavior with fewer guiding trajectories when compared to standard behavioral cloning methods. We present simulation results on a 7-degree-of-freedom (DoF) robotic manipulator that learns to exhibit lifting, target-reaching, and obstacle avoidance behaviors.
\end{abstract}

\vspace{-4mm}
\section{Introduction}
\label{introduction}
\vspace{-3mm}
Programming sophisticated behavior for complex systems like robots 
can be time-consuming and complicated.  Recently, there has been significant effort directed toward circumventing the process of manually programming these systems and instead automatically synthesizing desired behavior using machine learning. 
Reinforcement learning (RL), in particular, has been proposed for learning policies in robotics~\cite{ibarz2021train,schulman2015high}, but these algorithms suffer from poor data efficiency, and require time-consuming exploration. 
Additionally, exploration on systems like real-world robots 
can be expensive, they are physical machines that consume energy and can cause damage when executing poorly selected actions. 
To contend with these challenges, roboticists often turn to \emph{imitation learning} (IL), for safer, more data-efficient policy learning. 

The simplest method for imitation learning is behavior cloning (BC) \cite{bain1999behaviorcloning,Ly20BehaviorCloning}, where policy learning is reduced to supervised action prediction on sequences of demonstrated states. BC can be challenging, as observed in~\citet{Ross11Dagger}, because one-step action prediction error can accumulate causing policy rollouts to diverge from the demonstrations over time. 
This problem is known as {\em covariate shift}~\cite{Ross11Dagger}, 
and has been addressed via reductions to online learning, where an oracular expert 
is queried to provide demonstrations at newly encountered states when following the policy~\cite{Ross11Dagger,ross2014aggrevated,sun2017DeeplyAggrevated}. Unfortunately, access to an oracle is often unrealistic when demonstrations are expensive. 

Despite these known limitations, BC continues to be a popular approach in practice due to its simplicity. Moreover, recent work has shown that BC can achieve near-expert performance on a set of standardized tasks with RL experts~\cite{spencer2021feedback}. 
This inspires us to further scrutinize how errors are generated and accumulated within BC. 

We observe that an important source of one-step prediction error is \emph{inconsistencies} between the demonstrations and the learner, with examples including observation noise, dynamics mismatch, and limited expressivity of the learner's policy class. 
These errors are \emph{inherent to the demonstrations and the learner}, and often cannot be reduced during learning. For example, smooth policy classes are often used to learn robotics tasks since jerky motion can be harmful to robot hardware. However, demonstrations may not be smooth due to measurement noise, communication delay during teleoperation, and so on. In cases like these, the BC policy rollout may follow the demonstration well until it encounters a sharp turn, and will then diverge from the demonstration (see~\cref{fig:CoDE_illustration} (left)). 

Taking inspiration from collocation methods in optimal control~\cite{Hargraves87jgcd_collocation, Hereid16icra_collocationHumanoid, Posa16icra_collocationKinematic}, we propose a novel imitation learning technique called Collocation for Demonstration Encoding (CoDE). The key idea behind CoDE is to provide the BC learner a set of auxiliary trajectories that are near the demonstrations, while ensuring that the one-step prediction error is small for \emph{all} states along the auxiliary trajectories. During learning, we jointly optimize both the auxiliary trajectory and the policy: We minimize both the one-step prediction error with respect to the auxiliary trajectory as well as the deviation between the auxiliary trajectory and the demonstration (see~\cref{fig:CoDE_illustration} (middle, right)). 
Although CoDE requires the availability of a dynamics model, it can work with state-only demonstrations, which can be more easily obtained during practice (see Section 3 for a discussion).

CoDE can be viewed as formalizing a group of practical fixes to BC, such as smoothing the demonstrations~\cite{argall2009survey}, data augmentation~\cite{bojarski2016visualbackprop}, and noise injection~\cite{laskey2017dart,ke2020grasping}. These practical techniques either explicitly generate auxiliary trajectories which hopefully match well with the learner's policy class~\cite{rana2020benchmark}, or augment the demonstrations so that there exists such trajectories within its coverage, e.g. through noise injection~\cite{laskey2017dart,ke2020grasping}. However, these informal techniques are highly task-dependent. 
In contrast, CoDE fundamentally contends with the inconsistency 
by explicitly \emph{learning} a set of auxiliary trajectories that are consistent with the policy class. 

We provide theoretical intuitions on how CoDE is able to improve policy rollout performance over BC, 
and we demonstrate that CoDE can outperform both BC and BC with noise injection~\cite{ke2020grasping}, a common practical fix to BC, on two imitation learning tasks for a simulated 7-DoF Franka robot. 



\begin{figure}[!tb]
	\centering
	\vspace{-4mm}
    \includegraphics[width=1\columnwidth]{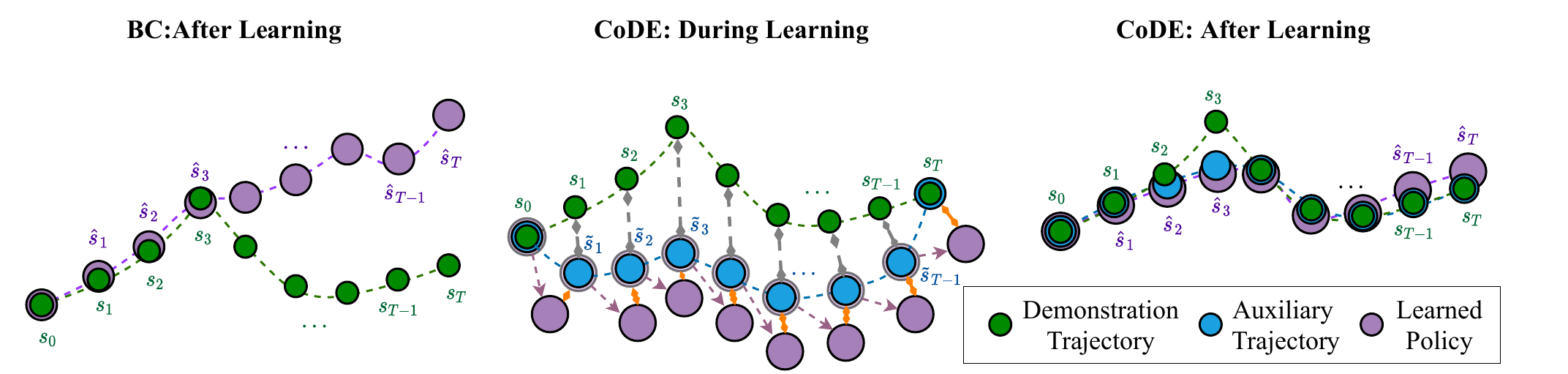}
    
	\caption{Intuition for CoDE. The green, blue and purple trajectories represent the demonstration, auxiliary, and policy rollout trajectories respectively. 
	\textbf{Left:} Learning to imitate a non-smooth demonstration using a smooth policy through \emph{BC}. The policy rollout (purple) diverges from the demonstration (green) after the sharp turn. \textbf{Middle:} In CoDE, we additionally parameterize an auxiliary trajectory. During learning, we simultaneously minimize the deviation between the auxiliary trajectory and the demonstration (gray arrows) and the one-step prediction loss of the learned policy with respect to the auxiliary trajectory (orange arrows). The purple arrows represent the actions predicted by the learned policy from states along the auxiliary trajectory $\{\tilde{s}_t\}_t$. \textbf{Right:} After convergence, the auxiliary trajectory stays close to the demonstration while being compatible with the policy class. 
	The rollout trajectory of the learned policy closely track both the auxiliary trajectory and the demonstration.}
	\label{fig:CoDE_illustration}
	\vspace{-6mm}
\end{figure}

\vspace{-4mm}
\section{Related work}
\vspace{-3mm}
Despite its popularity and simplicity, behavior cloning \cite{bain1999behaviorcloning,Ly20BehaviorCloning,PomerleauALVINN88} suffers from covariate shift~\cite{Ross11Dagger}. 
Partly due to this, 
there has been extensive work on Inverse Reinforcement Learning (IRL)~\cite{NgRussellIRL2000,AbbeelNgIRL2004}, including its reduction to structured prediction~\cite{MMPRatliff2006,StructuredPrediction2007} and related entropy maximization techniques~\cite{MaxEntIOCZiebart2008}. These algorithms require planners in the inner loop which adds both computational complexity and requires the existence of such globally optimal planners. There has been work \cite{LocallyOptimalIOCLevine2012,englert2017InverseKKT} into leveraging KKT conditions and optimization theory,
but all these methods focus on exploiting the structure of the MDP rather than directly training a policy.

Adversarial Imitation Learning (AIL) such as GAIL~\cite{Ho16arxiv_GAIL} and AIRL~\cite{Fu17arxiv_AIRL}, which frames imitation learning as learning a policy that minimizes a divergence between the state-action distributions of the expert trained policy, has also recently become a popular approach for imitation learning. These methods are sample inefficient since a large number of policy interactions with the environment are required and inherit training stability challenges characteristic of adversarial learning making them difficult to apply in practice to robot learning~\cite{spencer2021feedback}.

A separate thread of research introduced SMiLE \cite{ross2010Smile} and DAgger \cite{Ross11Dagger}, with later extensions \cite{ross2014aggrevated,sun2017DeeplyAggrevated}. These algorithms 
directly address the covariate shift issue both theoretically and pragmatically using a technique called dataset aggregation. However, these algorithms assume access to an oracle that can be queried, which is an stronger assumption than BC, 
since BC leverages only data collected \emph{a priori}. 

\citet{Venkatraman14DaD} focused in particular on learning dynamical systems, and devised a method of using the data itself as an oracular expert. This method, termed Data as Demonstrator (\textsc{DaD}), alleviates the covariate shift issue by generating synthetic data to guide the prediction back to demonstrated states. 
\textsc{DaD} is perhaps the most closely related to our technique CoDE in that CoDE also uses the data to effectively generate new BC problems that are easier to solve. 
However, \textsc{DaD} may generate conflicting demonstrations, an issue the authors acknowledge. In contrast, our technique is able to generate \emph{consistent} auxiliary demonstrations for the learner given its inductive bias. 
Moreover, our technique is more practical than \textsc{DaD} since our technique only solves a single optimization problem, while \textsc{DaD} requires solving a number of rounds of optimization problems generated by dataset aggregation.


\vspace{-3mm}
\section{Problem Statement}\label{sec:problem}
\vspace{-3mm}
We address the problem of imitation learning from trajectory demonstrations. In this section, we introduce the problem setup and discuss two common solutions, behavior cloning \mbox{and optimal tracking.} 

Consider a deterministic discrete-time system with known dynamics model $s_{t+1}=f(s_t, a_t)$
where $s_t\in\R^n$ is the state, and $a_t\in\R^m$ is the action of the system, and $f:\R^n\times\R^m\to\R^n$ is the system dynamics. 
We want to learn a reactive policy $\pi_\theta:s\mapsto\pi_\theta(s)$ from a restricted policy class $\Pi:=\{\pi_\theta:\theta\in\Theta\}$ given $N$ trajectory demonstrations $\{\tau^{(i)}\}_{i=1}^N$. 
Each demonstration is defined as a sequence of states, 
denoted $\tau^{(i)}:=\{s_t^{(i)}\}_{t=0}^{T_i}$, where $T_i$ is the horizon for the $i$th demonstration. We do not assume that the sequence of actions is given to the learner. 

\textbf{Remark.}
There are two major distinctions between our assumptions and common assumptions in prior work, e.g. \cite{rana2020benchmark,PomerleauALVINN88,Ho16arxiv_GAIL,Fu17arxiv_AIRL}:  \emph{1)} we assume known dynamics model;
\emph{2)} we assume that only state trajectories are available, instead of state-action trajectories commonly assumed in prior work. 
At the first glance, the assumption of known dynamics is stronger. However, when only states (not actions) are observed, having some knowledge on the dynamics is necessary. In fact, a common practice 
is to first extract actions from state trajectories using an inverse model, e.g. obtain velocity and acceleration actions through finite differencing~\cite{rana2020benchmark}, and then apply model-free algorithms such as BC. In~\cref{sec:instantiation}, we present an instantiation of our proposed technique with the same idea of extracting actions from an inverse model. 
%
%
Importantly, our analysis can be generalized to when only an approximate dynamics model is available, which could  be learned from data
\mbox{(see~\cref{fnote:approx}).}

We now describe two common strategies which we combine in Section~\ref{sec:CoDE} into our CoDE algorithm. The first one is a variant of behavior cloning (BC) with state-only demonstrations; 
the second solves an 
optimal control problem that addresses long-horizon performance.

\textbf{Behavior cloning.}
Suppose we are additionally given the actions which produces the trajectories, i.e., $a_t^{(i)}$ such that, $s_{t+1}^{(i)} = f(s_t^{(i)}, a_t^{(i)}),\,\forall i\in\{1,\ldots, N\},\, t\in\{0,\ldots,T_i-1\}.
$ One way to solve for a policy is to directly regress on actions along the demonstration trajectories, known as \emph{behavior cloning}~\cite{bain1999behaviorcloning,Ly20BehaviorCloning}. Formally, BC solves the following optimization problem:
    $\underset{\theta}{\min}\,
    \sum_{i=1}^N\sum_{t=0}^{T_i-1}\ell(a_t^{(i)}, \pi_\theta(s_t^{(i)})),$
where $\ell(\cdot,\cdot)$ is a symmetric and smooth 
loss function. 

However, the sequence of actions $\{\{a_t^{(i)}\}_t\}_i$ are not given under our setup. In this case, we can make use of the dynamics model ${f}$ and instead minimize the one-step prediction error, i.e.
\begin{equation}\label{eq:bc_states}\small
    \underset{\theta}{\min}\, 
    \sum_{i=1}^N\sum_{t=0}^{T_i-1}\ell\left(s_{t+1}^{(i)}, {f}(s_t^{(i)},\pi_\theta(s_t^{(i)}))\right).
\end{equation}

Though appealing due to its simplicity, BC suffers from \emph{covariate shift}~\cite{Ross11Dagger,Venkatraman14DaD}, namely, cascading error over time when the learned policy is executed. Here, we restate a theorem from~\cite{Venkatraman14DaD} on the error bound for BC\footnote{When given an inaccurate dynamics model $\tilde{f}$ such that $\|f-\tilde{f}\|_\infty\leq \eta$, we can obtain a bound of $O(e^{(T_i+1)\log(L)}(\epsilon+\eta))$ through the same analysis. Similarly for~\cref{thm:code}.\label{fnote:approx}}, which follows directly from Theorem 1 in~\cite{Venkatraman14DaD} with $\widehat{M}:=f(\cdot, \pi_\theta(\cdot))$. 

\begin{theorem}[\cite{Venkatraman14DaD}]\label{thm:bc}
Assume the map $f(\cdot, \pi_\theta(\cdot))$ is $L$-Lipschitz with respect to norm $\|\cdot\|$, i.e., for any states $s$, $s'$,
    $\|f(s, \pi_\theta(s)) - f(s', \pi_\theta(s'))\| \leq L\|s-s'\|,$
Suppose that $L>1$, if one-step prediction error satisfies
    $\|s_{t+1}^{(i)} - {f}(s_t^{(i)},\pi_\theta(s_t^{(i)}))\|\leq\epsilon$
for all $i\in\{1,\ldots,N\}$, $t\in\{1,\ldots,T_i-1\}$, then
    $\|\hat{s}_t^{(i)}-s_t^{(i)}\|\leq \sum_{\tau=0}^{t-1}L^\tau\epsilon\in O(e^{t\log(L)}\epsilon),$
where $\{\hat{s}_t^{(i)}\}_{t=1}^{T_i-1}$ is the rollout trajectory of $\pi_\theta$ under dynamics $f$ initialized at $s_0^{(i)}$. 
Hence, the trajectory deviation measured in norm  $\|\cdot\|$ is bounded by,
\begin{equation}\label{eq:bc_bound}\small\textstyle
    \sum_{t=0}^{T_i}\|\hat{s}_t^{(i)}-s_t^{(i)}\| = O(e^{T_i\log(L)}\epsilon).
\end{equation}
\end{theorem}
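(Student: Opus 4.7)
The plan is a straightforward induction on $t$ to bound the per-step deviation $\|\hat{s}_t^{(i)} - s_t^{(i)}\|$, followed by a geometric-series summation to obtain the trajectory-level bound. Fix a demonstration index $i$ and drop it from the notation for clarity. The base case is trivial since the rollout is initialized at $\hat{s}_0 = s_0$, giving deviation $0 = \sum_{\tau=0}^{-1} L^\tau \epsilon$.

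For the inductive step, I would add and subtract $f(s_t,\pi_\theta(s_t))$ inside $\|\hat{s}_{t+1} - s_{t+1}\|$ and apply the triangle inequality:
\begin{equation*}\small
\|\hat{s}_{t+1} - s_{t+1}\| \leq \|f(\hat{s}_t,\pi_\theta(\hat{s}_t)) - f(s_t,\pi_\theta(s_t))\| + \|f(s_t,\pi_\theta(s_t)) - s_{t+1}\|.
\end{equation*}
The first term is bounded by $L\|\hat{s}_t - s_t\|$ using the $L$-Lipschitz assumption on $f(\cdot,\pi_\theta(\cdot))$, and the second is at most $\epsilon$ by the one-step prediction hypothesis. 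Substituting the inductive hypothesis $\|\hat{s}_t - s_t\| \leq \sum_{\tau=0}^{t-1} L^\tau\epsilon$ and re-indexing gives $\|\hat{s}_{t+1} - s_{t+1}\| \leq \sum_{\tau=0}^{t} L^\tau\epsilon$, closing the induction.

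To conclude, I would use the closed form $\sum_{\tau=0}^{t-1}L^\tau = \frac{L^t - 1}{L-1}$ which is $O(L^t) = O(e^{t\log L})$ when $L>1$, yielding the per-step claim. For the trajectory deviation \eqref{eq:bc_bound}, summing $\sum_{t=0}^{T_i}\frac{L^t - 1}{L-1}\epsilon$ again gives a geometric series dominated by its largest term, producing $O(e^{T_i\log L}\epsilon)$.

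There is no real obstacle: the only subtle point is bookkeeping of the summation indices to ensure the recursion $a_{t+1} \leq L a_t + \epsilon$ unrolls cleanly into a geometric series, and noting that the claimed rate is governed purely by the leading exponential term since $L>1$ makes the series grow geometrically. The extension mentioned in the footnote (approximate dynamics $\tilde f$ with $\|f-\tilde f\|_\infty \leq \eta$) would follow the same induction, simply replacing $\epsilon$ by $\epsilon + \eta$ in the error accumulation and shifting the Lipschitz exponent by one.
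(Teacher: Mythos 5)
Your proof is correct. Note that the paper does not actually prove this theorem itself --- its ``proof'' is a one-line citation to Theorem~1 of Venkatraman et al.\ with $\widehat{M}:=f(\cdot,\pi_\theta(\cdot))$ --- so what you have written is the standard compounding-error induction that underlies that cited result: the recursion $a_{t+1}\le L\,a_t+\epsilon$ with $a_0=0$, unrolled into a geometric series. The only cosmetic point worth flagging is that the theorem statement quantifies the one-step error over $t\in\{1,\ldots,T_i-1\}$ while your induction (correctly) needs it at $t=0$ as well; this is evidently a typo in the statement rather than a gap in your argument.
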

In general, it is hard to show that the Lipschitz constant $L\leq 1$ (in which case the error bound would be linear or sublinear to $T_i$) unless the policy class $\Pi$ has certain stability properties, e.g.~\cite{khansari2011learning}. However, this would also imply that the closed loop system (under the learned policy) has a contraction property and all trajectories must converge, which may not be desirable for many tasks. 

By~\cref{thm:bc}, the trajectory deviation is proportional to the one-step prediction error $\epsilon$ multiplied by a term which is exponentially increasing with respect to the task horizon $T_i$. 
Although the BC objective seeks to directly minimize the one-step prediction error $\epsilon$, the resulting error might still be non-trivial due to the reasons listed below (as a result, the rollout error becomes even larger):
\begin{enumerate}[leftmargin=*,itemsep=0ex, parsep=0.2ex]
\item \textit{Optimization error.} The algorithm was not able to converge to the global optimal solution of~\eqref{eq:bc_states}. 
\item \textit{Noise in state observation.} There could be measurement noise in the state demonstrations. 
\item \textit{Dynamics mismatch between the expert and the learner.} The demonstration may not be collected under the same dynamics. For example, the demonstration can be collected from human achieving a task, while the learner needs to find a policy for a robot manipulator. The demonstration might also have access to additional actions that are not available to the learner. 
\item \textit{Limitation of policy class.}
In practice, policy classes often have inductive bias that allow learned policies to generalize to unseen data and preserve certain desirable properties, e.g. smoothness. In the case when such properties are not satisfied by the demonstrations, e.g. non-smooth demonstrations for smooth policy classes, the one-step prediction error will inevitably be large. 
\end{enumerate}

\textbf{Optimal tracking.}
On the other side of the spectrum, optimal tracking and multi-step prediction~\cite{Venkatraman14DaD,anderson2007optimal,abbeel2005learning,langford2009learning,werbos1990backpropagation}, directly optimizes for the deviation from the trajectory demonstration and the rollout trajectory of the learned policy under the dynamics function $f$: 
\begin{align}\small
\min_{\theta,\{\hat{s}_{t}^{(i)}\}} \quad & 
\sum_{i=1}^N\sum_{t=0}^{T_i}\ell\big(s_t^{(i)}, \hat{s}_{t}^{(i)}\big)\label{eq:collocation-objective}\\
\textrm{s.t.\quad} &\hat{s}_{t+1}^{(i)} = {f}\big(\hat{s}_{t}^{(i)},\pi_\theta(\hat{s}_{t}^{(i)})\big)\qquad\forall i\in\{1,\ldots, N\}, t\in\{0,\ldots,T_i-1\},\label{eq:collocation-constraint}\\
  &\hat{s}_{0}^{(i)} = s_0^{(i)},\qquad \forall i\in\{1,\ldots, N\}\label{eq:collocation-init}.
\end{align}
The constraints~\eqref{eq:collocation-constraint}--\eqref{eq:collocation-init} ensure that $\{\hat{s}_t^{(i)}\}$ is the rollout trajectory of policy $\pi_\theta$ under dynamics ${f}$ initialized at $s_0^{(i)}$. 
This problem~\eqref{eq:collocation-objective}--\eqref{eq:collocation-init} could be potentially solved by back-propagation-through-time~\cite{werbos1990backpropagation,langford2009learning}. However, back-propagation-through-time often suffers from vanishing gradients for general dynamics functions and policy classes~\cite{Venkatraman14DaD}. 
There is also a fundamental difference between optimal tracking~\eqref{eq:collocation-objective}--\eqref{eq:collocation-init} and imitation learning: Optimal tracking only is only concerned with policy performance with respect to the given demonstrations, i.e., the training set, while imitation learning focuses on how the learned policy \emph{generalizes} to unseen data. 

\vspace{-4mm}
\section{Collocation for Demonstration Encoding}
\label{sec:CoDE}
\vspace{-3mm}
BC focuses on the problem of single-step action prediction and is myopic. As mentioned in~\cref{sec:problem}, in some cases, the action prediction errors are inevitable due to the inconsistency between the demonstrations and the policy class, e.g. measurement noise, dynamics mismatch, and inductive bias. 

We present our approach, Collocation for Demonstration Encoding (CoDE), for imitation learning which provides a principled way to mitigate the inconsistency between demonstrations and the policy class. 
%
%
The key idea behind CoDE is that we can present the BC policy learner with an auxiliary imitation problem more compatible with the policy class 
while ensuring this auxiliary problem still tracks the desired demonstrated trajectory well. 
In that way, the rollout error can be significantly reduced so that its effect on trajectory deviation is (almost) linear with respect to the problem horizon (rather than exponential, as in \eqref{eq:bc_bound}). To achieve this, we take inspiration from collocation methods~\cite{Hargraves87jgcd_collocation,Hereid16icra_collocationHumanoid,Posa16icra_collocationKinematic}, which have been successfully applied to motion planning and optimal control. Concretely, in addition to the policy, we also explicitly introduce parameterized trajectories which approximates the rollouts of the learned policy, which we call \textit{auxiliary trajectories}.

Formally, given auxiliary trajectories parameterized by $\phi$, denoted $\{\tilde{s}_{t,\phi}^{(i)}\}$. We explicitly parameterize the auxiliary trajectories $\{\tilde{s}_{t,\phi}^{(i)}\}$ such that its initial states match with the trajectory demonstrations, i.e., $\tilde{s}_{0,\phi}^{(i)} = s_0^{(i)},$ for all $i\in\{1,\ldots, N\}$, and solve the following optimization problem:
\begin{equation}\label{eq:collocation-lagrangian}\small
    \begin{split}
        &\min_{\theta,\phi} \; L(\theta,\phi;\lambda):=
        \sum_{i=1}^N\sum_{t=0}^{T_i}\ell\big(s_t^{(i)}, \tilde{s}_{t,\phi}^{(i)}\big) + \lambda\,
        \sum_{i=1}^N\sum_{t=0}^{T_i-1}\ell\Big(\tilde{s}_{t+1,\phi}^{(i)},  {f}\big(\tilde{s}_{t,\phi}^{(i)},\pi_\theta(\tilde{s}_{t,\phi}^{(i)})\big)\Big),
    \end{split}
\end{equation}
where $\ell(\cdot,\cdot)$ is a symmetric, smooth and continuously differentiable loss function with $\ell(s,s')=0$ if and only if $s=s'$, which measures state deviation. 
The objective function~\eqref{eq:collocation-lagrangian} is a weighted sum of \emph{1)} the deviation between the trajectory demonstrations $\{s_t^{(i)}\}$ and the auxiliary trajectory rollouts $\{\tilde{s}_{t,\phi}^{(i)}\}$, and \emph{2)} the one-step prediction error of the learned policy $\pi_\theta$ with respect to the \emph{auxiliary trajectories} $\{\tilde{s}_{t,\phi}^{(i)}\}$. The multiplier $\lambda>0$ is a hyper-parameter which trades-off the two terms. 

The idea of CoDE is illustrated in Fig.~\ref{fig:CoDE_illustration}, 
where trajectories in green, blue and purple represent demonstration trajectories, auxiliary trajectories, and the rollout trajectories of the learned policy, respectively (all starting from the same initial state). The middle figure shows that, during the learning process, we  minimize both \emph{1)} the deviation between the demonstration trajectory and the auxiliary trajectory and \emph{2)} the one-step prediction error between the learned policy along the auxiliary trajectory. The right figure shows that, after learning, the learned policy rollout trajectory more closely matches both the auxiliary and demonstration trajectories.

\textbf{Connection to optimal tracking.} Note that, when taking  $\lambda\to\infty$ in~\eqref{eq:collocation-lagrangian} and assuming that $\{\tilde{s}_{t,\phi}^{(i)}\}$ is expressive enough, due to the equivalence of the Lagrangian and the constraint formulation, we must have $i\in\{1,\ldots, N\}, t\in\{0,\ldots,T_i-1\}$, $\tilde{s}_{t+1,\phi}^{(i)} = \tilde{f}\big(\tilde{s}_{t,\phi}^{(i)},\pi_\theta(\tilde{s}_{t,\phi}^{(i)})\big)$. 
Under such condition, we recover the optimal tracking problem~\eqref{eq:collocation-objective}--\eqref{eq:collocation-init} (with our assumption that $\tilde{s}_{0,\phi}^{(i)} = s_0^{(i)}$). Compared to optimal tracking, our approach solves a \textit{well-conditioned} optimization problem~\eqref{eq:collocation-lagrangian}, which can be efficiently applied to a more general class of dynamics functions and policy classes. Our analysis is also applicable to when the auxiliary trajectories are from a restricted function class, in which case the equality constraint~\eqref{eq:collocation-constraint} may not be feasible. 

\textbf{Connection to behavior cloning. }Suppose that the auxiliary trajectories $\{\tilde{s}_{t,\phi}^{(i)}\}$ are expressive enough, we can potentially perfectly fit the demonstrated trajectories with the auxiliary trajectories so that  $\tilde{s}_{t,\phi}^{(i)} = s_t^{(i)}$ for all $t$ and $i$. If we keep the auxiliary trajectory fixed at $\tilde{s}_{t,\phi}^{(i)} = s_t^{(i)}$ while optimizing for the policy $\pi_\theta$ in~\eqref{eq:collocation-lagrangian}, we recover behavior cloning on the demonstrations (as the first term is $0$ by the choice of $\phi$). In fact, the policy is essentially solving the behavior cloning problem with respect to the auxiliary trajectories $\{\tilde{s}_{t,\phi}^{(i)}\}$ instead of the demonstrations $\{s_{t}^{(i)}\}$. By additionally parameterizing the auxiliary trajectories, our approach is able to \emph{separate out the inconsistency between the demonstrations and the policy class (the first term in~\eqref{eq:collocation-lagrangian}) from the behavior cloning error (the second term)}. 
We hereby state the main theorem for this paper.
\begin{theorem}\label{thm:code}
Assume that the map $f(\cdot, \pi_\theta(\cdot))$ is $L$-Lipschitz with respect to norm $\|\cdot\|$ with $L>1$. 
Suppose the following error bounds hold:
\vspace{-2mm}
\begin{enumerate}[leftmargin=*,itemsep=0.2ex]
    \item Uniformly bounded one-step error w.r.t. the auxiliary trajectories, i.e.,
    $\|\tilde{s}_{t+1,\phi}^{(i)} - {f}(\tilde{s}_t^{(i)},\pi_\theta(\tilde{s}_t^{(i)}))\|\leq\delta,$ for all $i\in\{1,\ldots,N\}$, $t\in\{1,\ldots,T_i-1\}$. 
    \item Uniformly bounded deviation between the demonstrated trajectories and the auxiliary trajectories,
    $\|s_t^{(i)}-\tilde{s}_{t,\phi}^{(i)}\|\leq \kappa$, for all $i\in\{1,\ldots,N\}$, $t\in\{1,\ldots,T_i-1\}$. 
\end{enumerate}
Then the trajectory deviation between the demonstrated trajectories and the rollout trajectories of $\pi_\theta$ measured in $\|\cdot\|$ is bounded by,
\begin{equation}\label{eq:code-bound}\small\textstyle
    \sum_{t=0}^{T_i}\|\hat{s}_t^{(i)}-s_t^{(i)}\| = O(\kappa T_i + e^{T_i\log(L)}\delta),
\end{equation}
where $\{\hat{s}_t^{(i)}\}_{t=1}^{T_i}$ is the rollout trajectory of $\pi_\theta$ under dynamics $f$ initialized at $s_0^{(i)}$. 
\end{theorem}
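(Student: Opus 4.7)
The plan is to insert the auxiliary trajectory $\tilde{s}_{t,\phi}^{(i)}$ as a triangle-inequality stepping stone between the policy rollout and the demonstration. Specifically, I would write
\[
\|\hat{s}_t^{(i)} - s_t^{(i)}\| \leq \|\hat{s}_t^{(i)} - \tilde{s}_{t,\phi}^{(i)}\| + \|\tilde{s}_{t,\phi}^{(i)} - s_t^{(i)}\|.
\]
Assumption 2 immediately bounds the second term by $\kappa$, so that piece contributes $O(\kappa T_i)$ when we sum over $t \in \{0,\ldots,T_i\}$. The whole argument then reduces to controlling the gap between the policy rollout and the auxiliary trajectory.

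For that first term I would essentially re-run the inductive argument underlying \cref{thm:bc}, but with the auxiliary trajectory playing the role of the demonstration. The base case is $\hat{s}_0^{(i)} = s_0^{(i)} = \tilde{s}_{0,\phi}^{(i)}$ by construction, so the initial error is zero. For the induction step I would add and subtract $f(\tilde{s}_{t,\phi}^{(i)}, \pi_\theta(\tilde{s}_{t,\phi}^{(i)}))$ and apply the triangle inequality:
\[
\|\hat{s}_{t+1}^{(i)} - \tilde{s}_{t+1,\phi}^{(i)}\| \leq \|f(\hat{s}_t^{(i)},\pi_\theta(\hat{s}_t^{(i)})) - f(\tilde{s}_{t,\phi}^{(i)},\pi_\theta(\tilde{s}_{t,\phi}^{(i)}))\| + \|f(\tilde{s}_{t,\phi}^{(i)},\pi_\theta(\tilde{s}_{t,\phi}^{(i)})) - \tilde{s}_{t+1,\phi}^{(i)}\|,
\]
where the first summand is at most $L\|\hat{s}_t^{(i)} - \tilde{s}_{t,\phi}^{(i)}\|$ by $L$-Lipschitzness of $f(\cdot, \pi_\theta(\cdot))$, and the second is at most $\delta$ by assumption 1. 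Unrolling the linear recursion $a_{t+1} \leq L a_t + \delta$ with $a_0 = 0$ gives $\|\hat{s}_t^{(i)} - \tilde{s}_{t,\phi}^{(i)}\| \leq \sum_{\tau=0}^{t-1} L^\tau \delta = O(e^{t\log L}\delta)$.

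Putting the two contributions together and summing yields
\[
\sum_{t=0}^{T_i}\|\hat{s}_t^{(i)} - s_t^{(i)}\| \leq (T_i+1)\kappa + \sum_{t=0}^{T_i}\sum_{\tau=0}^{t-1} L^\tau \delta = O(\kappa T_i) + O(e^{T_i\log L}\delta),
\]
which is the stated bound. The main obstacle is really conceptual rather than technical: one has to recognize that inserting $\tilde{s}_{t,\phi}^{(i)}$ as the interpolant is what decouples the two error sources, so that the policy-class-bias term $\kappa$ enters only linearly in $T_i$ while only the smaller per-step collocation residual $\delta$ is subjected to the exponential Lipschitz blow-up---which is precisely the advertised benefit of CoDE over naive behavior cloning. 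Beyond that observation, the argument is a direct adaptation of the proof of \cref{thm:bc}, and no new analytic machinery is needed; I would just need to take mild care that the initial-state matching $\tilde{s}_{0,\phi}^{(i)} = s_0^{(i)}$ is indeed what kills the base case so that no extra constant rides along in front of the exponential.
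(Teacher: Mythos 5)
Your proposal is correct and follows essentially the same route as the paper: insert the auxiliary trajectory via the triangle inequality, bound the demonstration--auxiliary gap by $\kappa$ per step, and control the rollout--auxiliary gap by applying the argument of \cref{thm:bc} with the auxiliary trajectory in the role of the demonstration. The only difference is cosmetic --- the paper simply cites \cref{thm:bc} for the exponential term (with a typo, self-referencing \cref{thm:code}), whereas you unroll the recursion $a_{t+1}\leq L a_t+\delta$ explicitly.
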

\vspace{-4mm}
\begin{proof}
By the triangle inequality, and that the demonstration, auxiliary trajectory, and the policy rollout are initialized at the same state, i.e., $\tilde{s}_{0,\phi}^{(i)}=\hat{s}_0^{(i)}=s_0^{(i)}$,
\begin{equation*}\small
    \begin{split}
        \sum_{t=0}^{T_i}\left\|\hat{s}_t^{(i)}-s_t^{(i)}\right\| &\leq \sum_{t=1}^{T_i}\left\|\tilde{s}_{t,\phi}^{(i)}-s_t^{(i)}\right\| + \sum_{t=1}^{T_i}\left\|\hat{s}_t^{(i)}-\tilde{s}_{t,\phi}^{(i)}\right\| \leq \kappa T_i + O(e^{T_i\log(L)}\delta)= O(\kappa T_i + e^{T_i\log(L)}\delta),
    \end{split}
\end{equation*}
where the last inequality follows directly from \cref{thm:code} given that the learned policy is resulted from behavior cloning on the auxiliary trajectories. 
\end{proof}
\vspace{-2mm}

\textbf{Remark.} It may seem that our bound~\eqref{eq:code-bound} is no better than the behavior cloning bound~\eqref{eq:bc_bound} since both of them have a term which grows exponentially as the horizon $T_i$. This, however, is not true as the error bounds $\epsilon$ in~\eqref{eq:bc_bound} and $\delta$ in~\eqref{eq:code-bound} are fundamentally different. 
As is discussed in~\cref{sec:problem}, the error $\epsilon$ captures both optimization error, and the inconsistency between the demonstrations and the policy class. The error caused by this inconsistency is \textit{intrinsic to demonstrations and the policy class} and often cannot be reduced. 
The error $\delta$, however, can be changed by the hyper-parameter $\lambda$. Provided that the auxiliary trajectories are expressive enough\footnote{The expressivity of the auxiliary trajectories is in general not a concern because \emph{1)} they only need to model a relative simple function (compared to the policy): a direct mapping from time to state, and \emph{2)} since the auxiliary trajectories are only used during training, we can overparameterize them without worrying about overfitting.}, we can drive $\delta$ to $0$ by choosing $\lambda\to\infty$. This would restrict the auxiliary trajectories to only those that can be generated by policies in the policy class. Therefore, our approach can be considered as effectively reducing the one-step prediction error of behavior cloning (which aggregated quickly as time increases) at a small cost of imitating auxiliary trajectories that are $\kappa$-close to the original demonstrations. For example, if we are learning to imitate a non-smooth demonstration using a smooth policy class (as is shown in~\cref{fig:CoDE_illustration}), we can imagine the auxiliary trajectories converge to a smoothed version of the demonstrations, still with small $\kappa$, that are likely to produce significantly smaller behavior cloning error $\delta$. 


\vspace{-4mm}
\section{Realization of CoDE for Acceleration-based Systems}\label{sec:instantiation}
\vspace{-3mm}

In this section, we present a realization of CoDE for discrete-time acceleration-based systems, which are commonly encountered in robotics~\cite{Cheng18rmpflow,Ratliff18riemannian}. The system is define as:
\begin{equation}\label{eq:double-integrator}\small
    s_{t+1} = \begin{bmatrix}
    \q_{t+1}\\\qd_{t+1}
    \end{bmatrix} = \begin{bmatrix}
    \q_t + \qd_tT_s\\
    \qd_t + \pi_\theta(\q_t,\qd_t)T_s
    \end{bmatrix}:=f(s_t, \pi_\theta(s_t)),
\end{equation}
where the state $s_t$ consists of a tuple of position $\q_t\in\R^d$ and velocity $\qd_t\in\R^d$, the policy $\pi_\theta(\q_t,\qd_t)$ produces the acceleration action $a_t=\ab_t$, and $T_s$ is the sampling time of the discrete-time system.  
For fully actuated torque-controlled robotics problems, we can often obtain an acceleration-based system through feedback linearization with an inverse dynamics model~\cite{siciliano2010robotics}. We follow the robotics terminology and call $\qd$ the \emph{generalized coordinate} in the \emph{configuration space} (also denoted as $\CC$ space), and $\qd$ the \emph{generalized velocity}. Examples of generalized coordinate include joint angles for a robot manipulator, and the $2$-d position for a planar particle. 

\vspace{-3mm}
\subsection{CoDE for Acceleration-based Systems}
\vspace{-2mm}
Assume that the policy $\pi_\theta$ is Lipschitz continuous, which can be ensured by, e.g., bounding the weights in a neural network. The closed loop system under the acceleration-based dynamics~\eqref{eq:double-integrator} is also Lipschitz continuous (see~\cref{sec:proofs} for proof of the proposition). 
\begin{restatable}{proposition}{lipschitz}\label{prop:lipschitz}
Suppose that the policy $\pi_\theta$ is $\beta$-Lipschitz continuous with respect to $s_t$. Then the mapping $f(\cdot,\pi_\theta(\cdot))$ with $f$ defined in~\eqref{eq:double-integrator} is $L$-Lipschitz continous with respect to the $\ell_2$ norm with $L=\sqrt{2(1+(1+\beta^2)T_s^2)}$.
\end{restatable}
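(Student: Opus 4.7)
The plan is to directly expand the block structure of $f(\cdot,\pi_\theta(\cdot))$ defined in~\eqref{eq:double-integrator}, bound each block with the elementary inequality $\|a+b\|_2^2 \le 2\|a\|_2^2 + 2\|b\|_2^2$, invoke the $\beta$-Lipschitz assumption on $\pi_\theta$, and collect terms. The computation is entirely elementary; the main task is to keep track of the block-wise decomposition so the constant comes out exactly as stated.

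First, for any two states $s=(\q,\qd)$ and $s'=(\q',\qd')$, I would write
$$
f(s,\pi_\theta(s)) - f(s',\pi_\theta(s')) = \begin{pmatrix} (\q-\q') + (\qd-\qd')T_s \\ (\qd-\qd') + \bigl(\pi_\theta(s)-\pi_\theta(s')\bigr)T_s \end{pmatrix},
$$
so that $\|f(s,\pi_\theta(s)) - f(s',\pi_\theta(s'))\|_2^2$ splits as the sum of the squared $\ell_2$ norms of the position and velocity blocks. Applying $\|a+b\|_2^2 \le 2\|a\|_2^2 + 2\|b\|_2^2$ to each block bounds the position block by $2\|\q-\q'\|_2^2 + 2T_s^2\|\qd-\qd'\|_2^2$ and the velocity block by $2\|\qd-\qd'\|_2^2 + 2T_s^2\|\pi_\theta(s)-\pi_\theta(s')\|_2^2$. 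The $\beta$-Lipschitz hypothesis then turns the last term into $2\beta^2 T_s^2\|s-s'\|_2^2 = 2\beta^2 T_s^2\bigl(\|\q-\q'\|_2^2 + \|\qd-\qd'\|_2^2\bigr)$, using that $\|s-s'\|_2^2 = \|\q-\q'\|_2^2+\|\qd-\qd'\|_2^2$ for the concatenated state.

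Finally I would add the two block bounds and regroup by $\|\q-\q'\|_2^2$ and $\|\qd-\qd'\|_2^2$, obtaining coefficients $2(1+\beta^2 T_s^2)$ and $2(1+(1+\beta^2)T_s^2)$ respectively. Upper-bounding the former by the (larger) latter and pulling out the common factor yields
$$
\|f(s,\pi_\theta(s)) - f(s',\pi_\theta(s'))\|_2^2 \le 2\bigl(1+(1+\beta^2)T_s^2\bigr)\,\|s-s'\|_2^2,
$$
which gives $L = \sqrt{2(1+(1+\beta^2)T_s^2)}$ upon taking square roots. The only subtle point, and the place a careless argument could pick up a spurious factor, is to apply the $\|a+b\|_2^2 \le 2\|a\|_2^2+2\|b\|_2^2$ inequality exactly once per block rather than again after summing, and to verify that the velocity-block coefficient genuinely dominates; once that bookkeeping is done the constant falls out as claimed.
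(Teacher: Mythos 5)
Your proof is correct and follows essentially the same route as the paper's: the same block decomposition of $f(s,\pi_\theta(s))-f(s',\pi_\theta(s'))$, the same per-block application of $\|a+b\|_2^2\le 2\|a\|_2^2+2\|b\|_2^2$, and the same use of the $\beta$-Lipschitz hypothesis, differing only in how the terms are regrouped at the end (you bound the smaller coefficient $2(1+\beta^2 T_s^2)$ by the larger one, while the paper first absorbs $2\|\q-\q'\|_2^2$ into $2(1+T_s^2)\|s-s'\|_2^2$ before adding the Lipschitz term). Both yield the identical constant $L=\sqrt{2(1+(1+\beta^2)T_s^2)}$.
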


Given~\cref{prop:lipschitz}, we have that the mapping $f(\cdot,\pi_\theta(\cdot))$ is $L$-Lipschitz with $L>1$ in general (unless the policy $\pi_\theta$ admits additional stability properties). Therefore, \cref{thm:bc} and \cref{thm:code} both hold and CoDE can be applied to mitigate covariate shift caused by the inconsistency between the demonstrations and the policy class. 

Further, we note that, for the acceleration-based system~\eqref{eq:double-integrator}, the one-step prediction error is proportional to the action error. Namely, let $s_{t+1}$ and $s_{t+1}'$ be the states after executing actions (accelerations) $\ab_t$ and $\ab_t'$ at state $s_t=(\q_t,\qd_t)$, respectively. Then we have, 
\begin{equation*}\small
    \begin{split}
        \|s_{t+1} - s'_{t+1}\|_2^2 &= \|(\q_t + \qd_tT_s) - (\q_t + \qd_tT_s)\|_2^2+ \|(\qd_t +  \ab_tT_s) - \qd_t + \ab_t'T_s)\|_2^2 = T_s^2\|\ab_t-\ab_t'\|_2^2.
    \end{split}
\end{equation*}
This means that if we can extract the actions from the auxiliary trajectories (through, e.g. differentiation), $\{\tilde{a}_{t,\phi}^{(i)}\}$, we can equivalently put the loss on the action deviation between the learned policy and the auxiliary trajectories:
\begin{equation}\label{eq:collocation-double-integrator}\small
    \begin{split}
        \min_{\theta,\phi} \; &L(\theta,\phi;\nu):=
        \sum_{i=1}^N\sum_{t=0}^{T_i}\left\|s_t^{(i)} - \tilde{s}_{t,\phi}^{(i)}\right\|_2^2+ \nu\,
    \sum_{i=1}^N\sum_{t=0}^{T_i-1}\left\|\tilde{a}_{t,\phi}^{(i)}-\pi_\theta\big(\tilde{s}_{t,\phi}^{(i)}\big)\right\|_2^2, 
    \end{split}
\end{equation}
with the multiplier $\nu >0$, where $\nu=\lambda T_s^2$ with the multiplier $\lambda$ defined in~\eqref{eq:collocation-lagrangian}. We choose loss function $\ell(s,s')=\|s-s'\|_2^2$ in consideration of~\cref{thm:code} and~\cref{prop:lipschitz}. 

\begin{figure*}[!t]
    \vspace{-4mm}
    \centering
    \begin{subfigure}[b]{0.55\textwidth}
        \centering
    	\includegraphics[width=\textwidth]{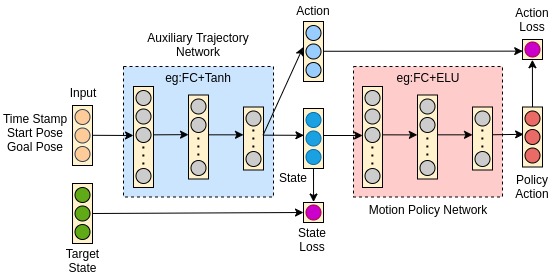}
    \end{subfigure}
	\caption{The architecture for CoDE on acceleration-based systems. See~\cref{sec:architecture} for details. }
	\vspace{-6mm}
	\label{fig:franka_demo}
\end{figure*}

\vspace{-3mm}
\subsection{Auxiliary Trajectory Parameterization}\label{sec:trajectory_net}
\vspace{-2mm}
We now discuss our choice of parameterizing the auxiliary trajectories and how do we extract action sequences $\{\tilde{a}_{t,\phi}^{(i)}\}$. 
For the acceleration-based system~\eqref{eq:double-integrator}, we choose to generate the auxiliary (position and velocity) trajectories from parameterized continuous-time position trajectories $\{{\rho}_\phi^{(i)}\}_{i=1}^N$, where each trajectory is a mapping from (continuous) time $\tau$ to positions $\q(\tau)$, i.e., ${\rho}_\phi^{(i)}:\tau\mapsto\q(\tau)$. We can sample positions $\{\tilde{\q}_{t,\phi}^{ (i)}\}$ from the position trajectories, and obtain the velocities $\{\tilde{\qd}_{t,\phi}^{(i)}\}$ through finite difference: 
    $\tilde{\q}_{t,\phi}^{(i)} = \rho_\phi^{(i)}(t\,T_s),\; \tilde{\qd}_{t,\phi}^{(i)} = \frac{1}{2 \Delta}(\rho_\phi^{(i)}(t\,T_s + \Delta) - \rho_\phi^{(i)}(t\,T_s - \Delta)),$
 where $\Delta$ can be made small to improve numerical accuracy\footnote{$\Delta$ can be set very small to improve the numerical estimation of its gradient. Other numerical methods like the biocomplex-step and hyper-dual methods offer derivative estimation accuracy to machine precision~\cite{agamawi2020comparison}. Also note that finite differencing is done for the auxiliary trajectories rather than the demonstrations. 
}. 
We can similarly calculate the acceleration actions $\{\tilde{a}_{t,\phi}^{(i)}\}$ through finite differencing $\{\qd_t^{(i)}\}$. 
Further, inspired by splines~\cite{epperson2013introduction}, we choose to parameterize the auxiliary trajectories such that their initial and the final states match the demonstrations, i.e., $\tilde{s}_{0,\phi}^{(i)}=s_0^{(i)}$ and $\tilde{s}_{T_i,\phi}^{(i)}=s_{T_i}^{(i)}$ for all $i\in\{1,\ldots,N\}$. Our parameterization is detailed in~\cref{sec:experiment_details}.

\vspace{-3mm}
\subsection{Overall Architecture}\label{sec:architecture}
\vspace{-2mm}
A diagram of the overall architecture for CoDE is shown in~\cref{fig:franka_demo}, in which we have two separate neural networks, one for the auxiliary trajectory, and the other one for the policy. The auxiliary trajectory network takes a tuple of time, start and goal end-effector poses, and obstacle information, if applicable, as input, and outputs a state vector which consists of position and velocity (see~\eqref{eq:traj_net}). An acceleration action is obtained through finite differencing the velocity. The policy network takes the state from the trajectory network and outputs an action. 

To ensure that auxiliary trajectory matches the trajectory demonstration, we have a loss~\eqref{eq:collocation-double-integrator} consists of two parts: \textit{1)} a loss between the target state from the trajectory demonstrations and the auxiliary trajectories, and \textit{2)} a loss that captures the difference between the action generated by the policy and the action output from the auxiliary trajectory network. As is discussed in~\cref{sec:CoDE}, the second loss is imposed such that the auxiliary trajectory becomes an approximated rollout trajectory for the learned policy. By optimizing both losses simultaneously, we can learn 
a policy whose rollout trajectories match both the auxiliary trajectories and the demonstrations.

\vspace{-4mm}
\section{Experiments}\label{sec:experiments}
\vspace{-3mm}

We present the results 
for a simulated 7-degrees-of-freedom (DoF) Franka robotic manipulator on two different tasks: \textit{1)} end-effector lifting and random goal reaching (see \cref{fig:exp_task1}a); and \textit{2)} random goal reaching around a randomly oriented object (see \cref{fig:exp_task2}a). 
These tasks 
contain non-trivial, nonlinear motion that is not reproducible by typical straight-line operational space or Cartesian-space controllers. We include additional details about the experiment setup and data collection in~\cref{sec:experiment_details}. 

\textbf{Policy network:}
We use a 3-layer fully connected neural network\footnote{We also test the proposed algorithm on a structured policy class called Riemannian Motion Policies (RMPs)~\cite{Cheng18rmpflow,li2021rmp2} and observe similar results. We report these results in~\cref{sec:rmp}.} to parameterize the policy. 
For Task 1, the neural network policy takes as input the configuration space state ($14$-dimensional vector with joint angles and velocities) and the $3$-dimensional goal position, and outputs a $7$-dimensional joint acceleration vector as the action. For Task 2, the neural network policy takes in a 4-dimensional cylinder orientation in addition to the robot state vector and the goal position.

\textbf{Auxiliary trajectory network:} We use a set of 3-layer neural networks with $\tanh$ activation function to parameterize the auxiliary trajectories. The details of our auxiliary trajectory parameterization is included in~\cref{sec:experiment_details}. 
For Task 1, we use the $7$-dimensional goal position and orientation (in quaternions) as the external features, and for Task 2, we add the 4-dimensional cylinder orientation as external features in addition to the 7-dimensional goal position and orientation. 

\textbf{Baselines:}
We present two baselines: \textit{1)} behavior cloning (\textsc{BC}) and \textit{2)} behavior cloning with noise injection (\textsc{BC+Noise}), a common practical trick to improve BC performance. 
For \textsc{BC}, we learn the policy by optimizing a loss between the acceleration from the data and the acceleration generated by the policy (equivalent to~\eqref{eq:bc_states} for acceleration-based systems). For \textsc{BC+Noise}, we follow~\citet{ke2020grasping}, and add Gaussian noise with $\sigma = 0.05$ to 20\% of training trajectories.

\begin{figure*}[!t]
    \centering
    \vspace{-4mm}
    \begin{subfigure}[b]{0.3\textwidth}
        \centering
        \small Task 1\\
        \vspace{2mm}
    	\includegraphics[width=0.8\textwidth]{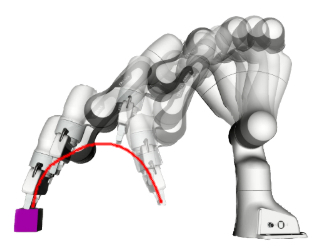}
    	\caption{}
    	\vspace{-2mm}
    \end{subfigure}
    \hfill
    \begin{subfigure}[b]{0.3\textwidth}
        \centering
    	\includegraphics[width=\textwidth]{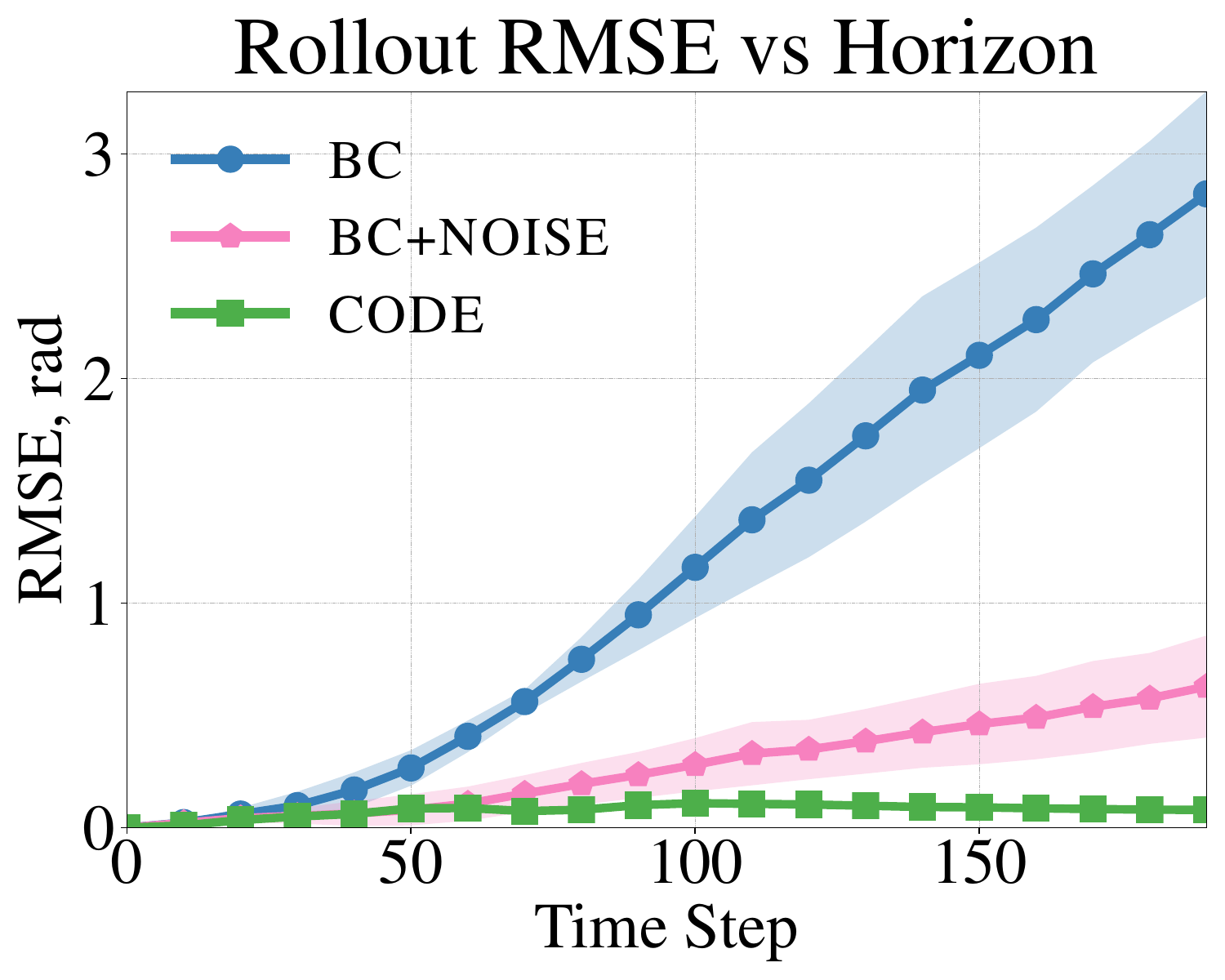}
    	\vspace{-6mm}
        \caption{}
        \vspace{-2mm}
    \end{subfigure}
    \hfill
    \begin{subfigure}[b]{0.3\textwidth}
        \centering
    	\includegraphics[width=\textwidth]{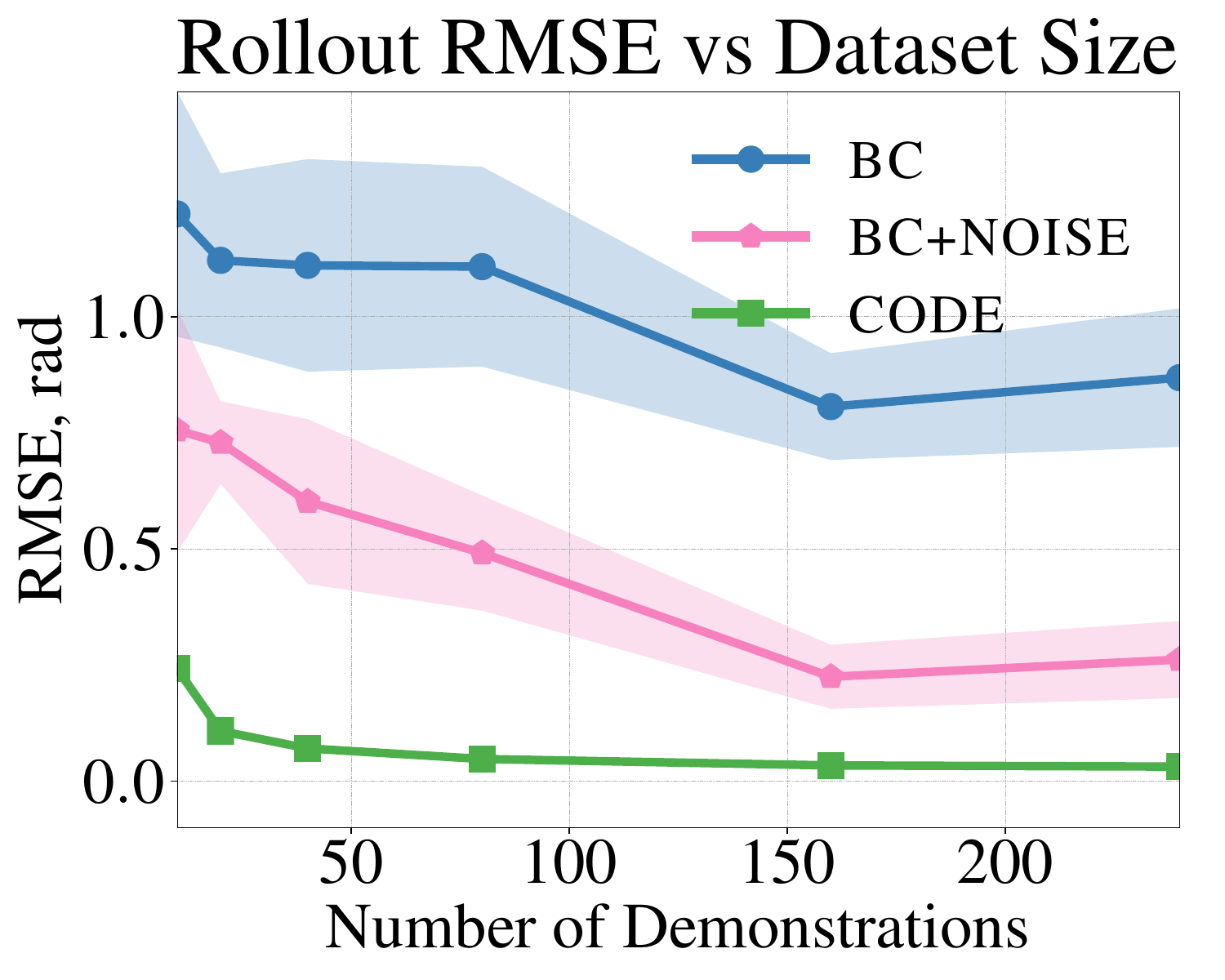}
    	\vspace{-6mm}
        \caption{}
        \vspace{-2mm}
    \end{subfigure}
	\caption{End-effector lifting and random goal reaching (Task 1). 
	(a) Visualization of one demonstration trajectory. 
	The arm’s transparency ranges from light at the beginning of the trajectory to dark at the end. (b) State deviation measured in $\ell_2$ norm for each time step during rollout initialized at the starting point of testing trajectories. 
	(c) Rollout RMSE on the reserved testing trajectories with policies trained on $10$ to $240$ trajectories. 
	For (b) and (c), the solid lines correspond to the mean performance evaluated across 6 random training seeds with policies assessed on $50$ testing trajectories. The shaded area is given by mean $\pm $ standard deviation over the random network initializations. 
	}
	\label{fig:exp_task1}
\end{figure*}

\begin{figure*}[!t]
    \centering
    \vspace{-2mm}
    \begin{subfigure}[b]{0.3\textwidth}
        \centering
        \small Task 2\\
    	\includegraphics[width=0.7\textwidth]{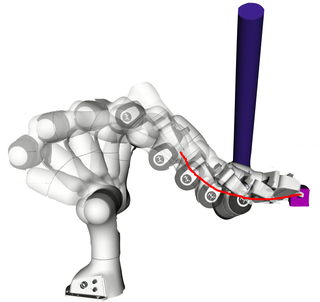}
    	\caption{}
    	\vspace{-2mm}
    \end{subfigure}
    \hfill
    \begin{subfigure}[b]{0.3\textwidth}
        \centering
    	\includegraphics[width=\textwidth]{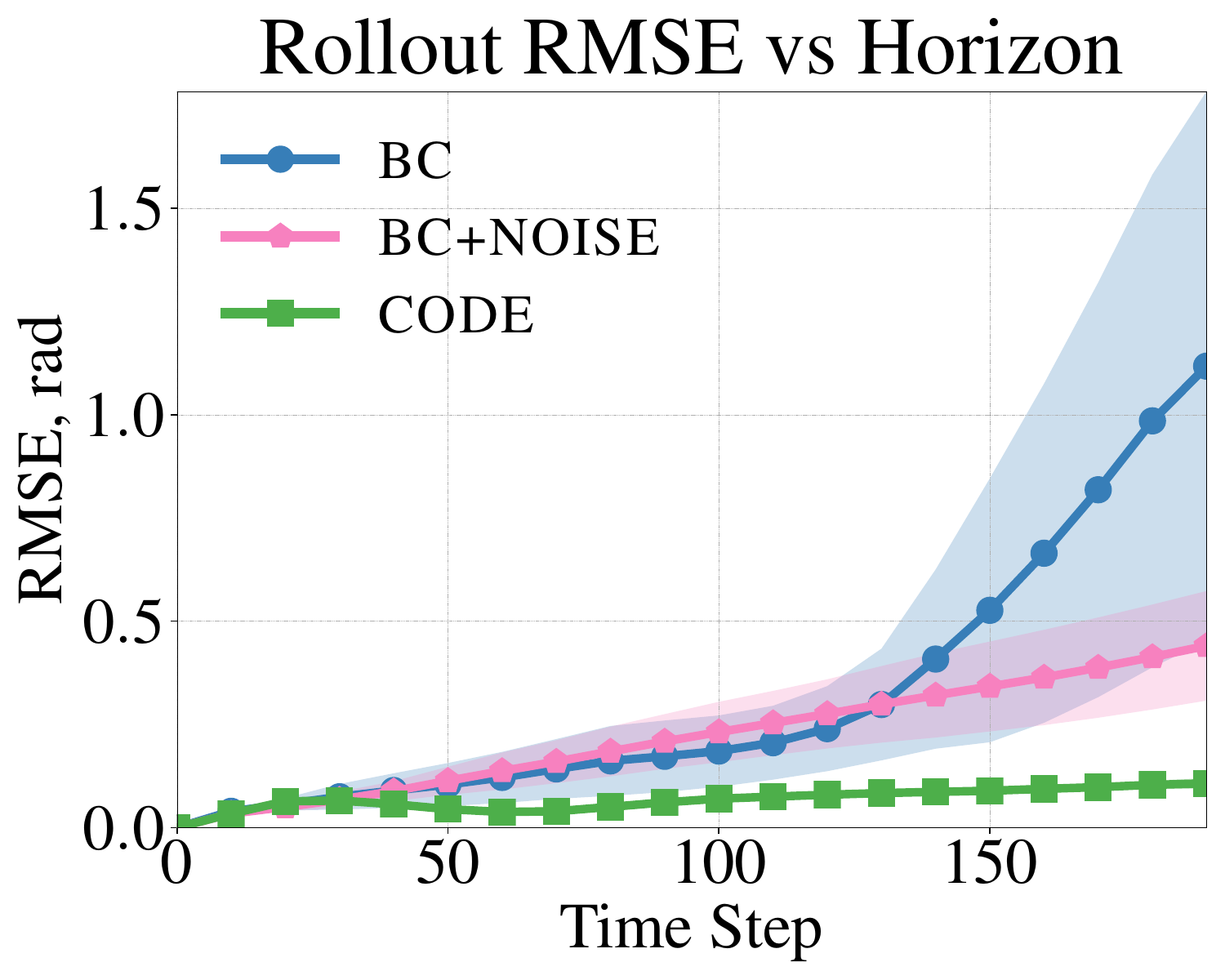}
    	\vspace{-6mm}
    	\caption{}
    	\vspace{-2mm}
    \end{subfigure}
    \hfill
    \begin{subfigure}[b]{0.3\textwidth}
        \centering
    	\includegraphics[width=\textwidth]{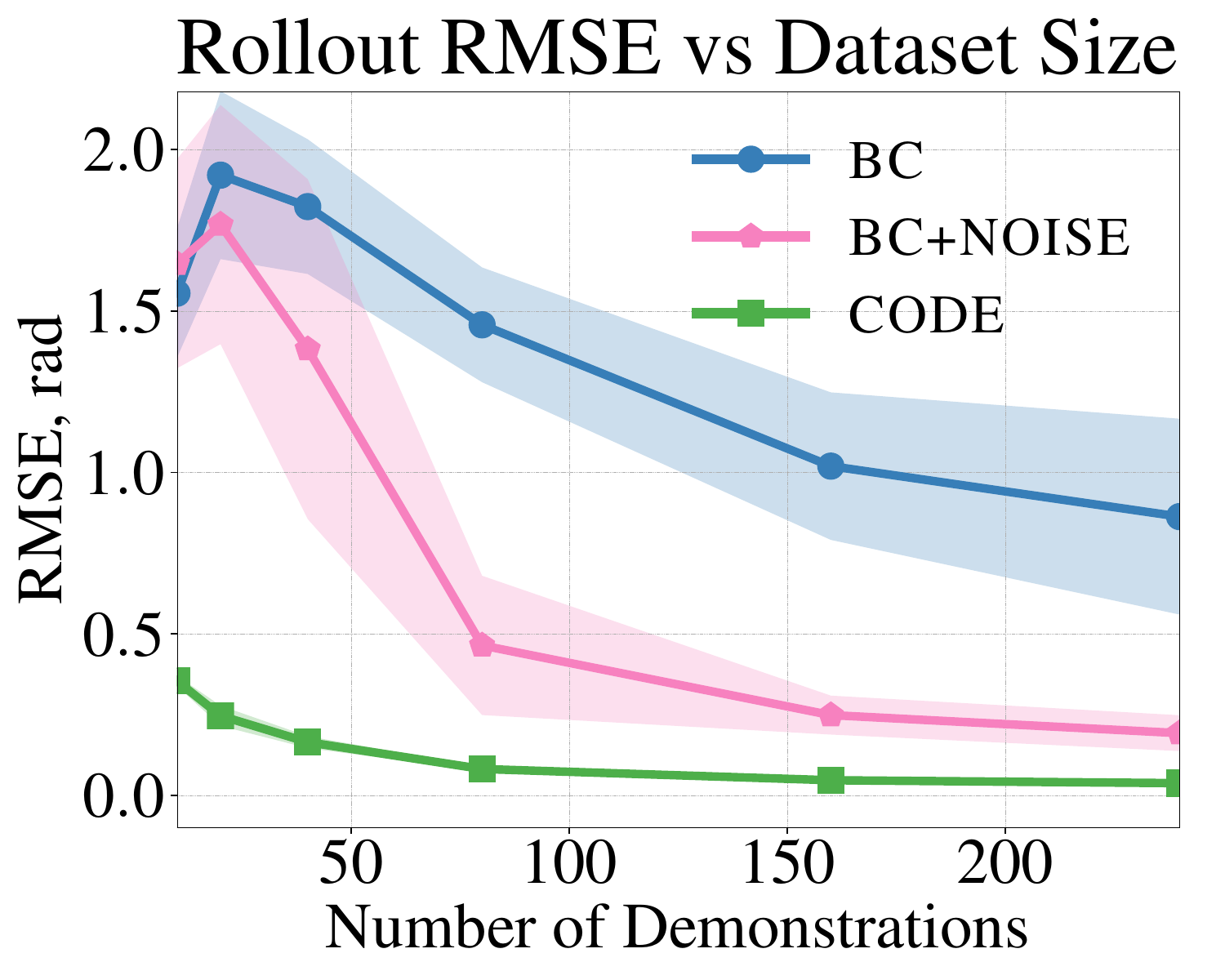}
    	\vspace{-6mm}
    	\caption{}
    	\vspace{-2mm}
    \end{subfigure}
	\caption{
	Random goal reaching around a randomly oriented object (Task 2). (a) Visualization of one demonstration trajectory. (b) State deviation measured in $\ell_2$ norm for each time step during rollout initialized at the starting point of testing trajectories. (c) Rollout RMSE on the testing trajectories with policies trained on $10$ to $240$ trajectories. Same trends are observed as for Task 1~(c.f.~\cref{fig:exp_task1})}
	\vspace{-4mm}
	\label{fig:exp_task2}
\end{figure*}

\textbf{Procedure:} We train \textsc{BC}, \textsc{BC+Noise}, and \textsc{CoDE} on sets of 
trajectories with sizes $[13, 25, 50, 100, 200, 300]$ for each task. Within the trajectories, $80\%$ of them are used for updating model parameters (training set), and $20\%$ are used to assess model performance (validation set). 
For every $10$ epochs, we compute the rollout RMSE with respect to the trajectories within the validation set, and select the best-performing policy based on this. 
We test the selected policy for each task on a fixed set of $50$ trajectories (testing set) that are not presented in the training and validation set. We repeat the process 
across each algorithm, task, and training data size with $6$ different random seeds.

\textbf{Results:} From our experiments, we empirically examine \textsc{CoDE} along two primary performance indices: 
\emph{1)} error accrual rates along multi-step rollouts; and \emph{2)} overall task performance based on complete rollout error assessed on unseen trajectories. Performance mean and variation across all random seeds is captured in \cref{fig:exp_task1} and \cref{fig:exp_task2}.


\textit{Multi-step error accrual.}   We train \textsc{BC}, \textsc{BC+Noise}, and \textsc{CoDE} on specifically $240$ trajectories for each task. To validate our theoretical analysis of error accrual bounds (\cref{thm:bc} and \cref{thm:code}), we measure policy error across rollout horizon lengths. Specifically, we calculate the state deviation measured in $\ell_2$ norm for each time step when the learned policy is initialized at the testing trajectories. For each time step $t$ in the $x$-axis, the $y$-axis represents the statistics of state deviation between the trajectory demonstration and the learned policy rollout. 
We observe that for both Task 1 (\cref{fig:exp_task1}b) and Task 2 (\cref{fig:exp_task2}b), \textsc{BC} generates a multi-step rollout error that grows rapidly with horizon length while \textsc{BC+Noise} error is more attenuated, and \textsc{CoDE} yields the smallest error growth which is nearly flat even for long horizons. Moreover, the performance variation across random initializations is also much larger for \textsc{BC} and \textsc{BC+Noise} than for \textsc{CoDE}. This improved training precision is desirable quality for reducing lengthy training cycles. These results empirically validate our theoretical expectation (\cref{thm:code}) which is that a policy trained with \textsc{CoDE} should exhibit smaller rollout error along horizon length in comparison to a policy trained with \textsc{BC} (\cref{thm:bc}). 


\textit{Task performance.} We test the 
learned policies given by training sets of sizes 13 to 300 (20\% of the trajectories are reserved for the validation set) on a fixed testing set of 50 trajectories. 
As shown in \cref{fig:exp_task1}c and \cref{fig:exp_task2}c, policies trained with \textsc{CoDE} are significantly more accurate and less sensitive to random seeds, given the smaller RMSE means and standard deviations across all testing trials. 
For Task 1, the performance of policies produced by \textsc{CoDE} at only 10 training trajectories match those policies produced by \textsc{BC+Noise} at over 150 training trajectories. Similarly, for Task 2, the performance of policies produced by \textsc{CoDE} at about 40 training trajectories match those policies produced by \textsc{BC+Noise} at over 200 training trajectories. Moreover, policies produced by either BC variant cannot meet the performance of policies produced by CoDE even when trained across the maximum number of training demonstrations. Clearly, \textsc{CoDE} is not only capable of producing the best-performing policies when compared to both BC variants, but also obtains such performance at an order of magnitude lower number of training samples. This quality is quite significant for imitation learning because the production of demonstration data 
can be expensive and time consuming. 

For policies trained with 240 demonstrations, we evaluate the mean target errors (distance between the end-effoctor position at the end of the trajectory and the target position) for both tasks and 
the mean collision rates for Task 2 over 6 random seeds on the testing set.  
For Task 1, the mean target errors for \textsc{BC}, \textsc{BC+Noise} and \textsc{CoDE} are $0.89m$, 
$0.41m$, and $0.01m$, respectively. 
For Task 2, the mean target errors for \textsc{BC}, \textsc{BC+Noise} and \textsc{CoDE} are $0.78m$, $0.41m$, and $0.02m$, respectively; 
and the mean collision rates 
are $49.0\%$, $38.3\%$, and $5.3\%$, respectively. 
The policies learned with \textsc{CoDE} are significantly 
safer and more effective.


\textbf{Remark on baseline performance.} The poor BC performance is mainly caused by insufficient data coverage in the {velocity space}. The demonstrated trajectories all have relatively low velocity with a maximum of 1.14 $rad/s$
and the median of 0.48 $rad/s$ (in $\ell_2$-norm). 
As the number of demonstrations increases, the data coverage in the joint angle space improves, yet velocity coverage remains poor. This issue can be alleviated through noise injection, which provides slightly better coverage. 
This is demonstrated by the improved performance of \textsc{BC+Noise} over \textsc{BC} for both tasks (See~\cref{fig:exp_task1,fig:exp_task2}). However, \textsc{CoDE} yields significantly better results compared to both BC variants, as \textsc{CoDE} directly deals with the inconsistency between the demonstrations and the learner. 

\vspace{-4mm}
\section{Discussions}\label{sec:discussions}
\vspace{-3mm}
\textbf{Limitations \& future work. } As is discussed in~\cref{sec:problem}, CoDE assumes the availability of a dynamics model and state demonstrations, which is true for our experimental setup. In the cases when, e.g., only visual demonstrations are available, one can pre-train a latent state encoding and a dynamics model, e.g. as is done in~\cite{byravan2018se3,das2020model}. Although our analysis can be generalized to approximate dynamics model (see~\cref{fnote:approx}), we leave the experimental validation of this scenario for future work. In future work, we can also extend CoDE to learning dynamics models and polices jointly. 

\textbf{Potential negative social impacts.}
CoDE is a general imitation learning method that operates on demonstration data. In practice, the demonstration data may contain biases which can be learned by the resulting policies. Increased care and effort in detecting and removing potentially harmful data biases is generally advised.

\newpage
\bibliographystyle{unsrtnat}
\bibliography{references.bib}


\newpage
\appendix
\section{Proof of~\cref{prop:lipschitz}}\label{sec:proofs}

\lipschitz*

\begin{proof}
Consider two states, $s_t:=(\q_t,\qd_t)$, and $s'_t:=(\q'_t,\qd'_t)$, where we abuse our notation to use $(\cdot,\cdot)$ to denote vector concatenation. Then, we have,
\begin{equation*}
    \begin{split}
        &\|s_{t+1} - s'_{t+1}\|_2^2\\
        =& \|\q_{t+1} - \q'_{t+1}\|_2^2 + \|\qd_{t+1} - \qd'_{t+1}\|_2^2\\
        =& \|(\q_t + \qd_tT_s) - (\q'_t + \qd'_tT_s)\|_2^2 + \|(\qd_t + \pi_\theta(s_t)T_s) - (\qd'_t + \pi_\theta(s_t')T_s)\|_2^2\\
        =& \|(\q_t - \q_t') + T_s (\qd_t -\qd'_t)\|_2^2 + \|(\qd_t - \qd'_t) + T_s( \pi_\theta(s_t) - \pi_\theta(s_t'))\|_2^2\\
        \leq& 2\|\q_t - \q'_t\|_2^2 + 2T_s^2 \|\qd_t - \qd'_t\|_2^2 + 2\|\qd_t - \qd'_t\|_2^2 + 2T_s^2 \|\pi_\theta(s_t) - \pi_\theta(s_t')\|_2^2
    \end{split}
\end{equation*}
where the last inequality follows from the the parallelogram law:  $2\|a\|_2^2 + 2\|b\|_2^2 = \|a+b\|_2^2 + \|a-b\|_2^2\geq \|a+b\|_2^2$. By Lipschitz continuity of the policy $\pi_\theta$,
\begin{equation*}
    \begin{split}
        \|s_{t+1} - s'_{t+1}\|_2^2\leq& 2(1+T_s^2)(\|\q_t - \q'_t\|_2^2 + \|\qd_t - \qd'_t\|_2^2) + 2\beta^2 T_s^2 \|s_t-s_t'\|_2^2\\
        =& 2 (1 + (1 + \beta^2)T_s^2) \|s_t - s_t'\|_2^2,\textbf{}
    \end{split}
\end{equation*}
The proposition then follows from taking the square root on both sides of the equation. 
\end{proof}

\section{Experiment Details}\label{sec:experiment_details}
The first task is to reach a randomly generated end-effector pose from a random initial configuration. The end-effector poses are sampled in the region close to the robot, centered at $(0.5, 0, 0)$ in the base coordinates frame of the robot with a standard deviation of $0.2m$. The second task is to reach randomly sampled targets around a randomly oriented cylinder located in front of the robot from a fixed initial configuration, where the center of the cylinder is located at $(0.5, 0, 0.6)$ in the robot base coordinates frame, and the pitch angle of the cylinder is sampled in the range of $(-\pi/3, \pi/3)$.

\textbf{Dataset:}
Although our approach is applicable of learning from any trajectory data (include but not limited to trajectories generated from reactive policies, motion planners, and human demonstrations, etc.), we choose to collect trajectories from a sophisticated reactive policy so that we can perform controlled quantitative evaluation. 
We collected 530 demonstration trajectories (from which we extract smaller subsets in our experiments to study data efficiency) in the configuration space of the Franka robot using a state machine-driven Riemannian Motion Policy (RMP) system\footnote{The use of a state machine in the demonstrator and a more general deep RMP parameterization in the policy enforces that the demonstrations lie outside the learner's policy class.
}~\cite{Cheng18rmpflow} in simulation. \cref{fig:exp_task1}a visualizes one of the demonstrations; this behavior is a common primitive for tabletop manipulation. More details on our data collection setup is included in~\cref{sec:data_collection} for completeness. 
The data generated consists of robot lifting, reaching, and obstacle avoidance motions and therefore does not contain any personally identifiable information or offensive content. 

Each demonstration data point includes the robot's 7-DoF joint angles, velocities, and accelerations, along with the corresponding time stamp. For each demonstration, the initial end-effector pose and the goal pose are also recorded. The sample time for each data point is $0.01$ second, and the time horizon for each demonstration ranges from 2 to 6 seconds. We want to learn a motion policy that generates acceleration in the $\CC$ space at a given $\CC$ space state (position and velocity). We reserved 50 demonstration trajectories for testing, and make the rest available for training, where 80 $\%$ are used for training, and 20 $\%$ are used for validation.

\textbf{Auxiliary trajectory parameterization. }
We achieve this through reparameterization: 
We first construct a function $\psi_\phi^{(i)}(\tau)$ directly parameterized by parameter $\phi$ (Note that $\psi_\phi^{(i)}(\tau)$ does not need to match the initial and final states of the demonstration). Then, we obtain the parameterized positional trajectory $\rho_\phi^{(i)}(\tau)$:
\begin{equation}\label{eq:traj_spline}\small
    \begin{split}
        \rho_\phi^{(i)}(\tau) &:= \left(\frac{T_i-\tau}{T_i} + \frac{\tau(T_i-\tau)(T_i-2\tau)}{T_i^3}\right)\q_0^{(i)}+ \left(\frac{\tau}{T_i} - \frac{\tau(T_i-\tau)(T_i-2\tau)}{T_i^3}\right)\q_{T_i}^{(i)}\\
        &\qquad+ \frac{\tau(T_i-\tau)^2}{T_i^2}\qd_0^{(i)} - \frac{\tau^2(T_i-\tau)}{T_i^2}\qd_{T_i}^{(i)}+ \tau^2(T_i-\tau)^2\psi_\phi^{(i)}(\tau),
    \end{split}
\end{equation}
where $\q_0^{(i)}$, $\qd_0^{(i)}$ are the initial position and velocity of the demonstrated trajectory, and $\q_{T_i}^{(i)}$, $\qd_{T_i}^{(i)}$ are the final position and velocity. This formulation~\eqref{eq:traj_spline}, independent of the choice of $\psi_\phi^{(i)}(\tau)$, ensures that the initial and final states of the positional trajectory $\rho_\phi^{(i)}$ matches the demonstration. 
Therefore, one can use \emph{any} parameterization of $\psi_\phi^{(i)}$ that provides sufficient expressivity. 
For example, one can choose to parameterize $\psi_\phi^{(i)}(\tau)$ as a \emph{single} neural network, with
$\psi_\phi^{(i)}(\tau) := \psi(\tau, \q_0^{(i)}, \qd_0^{(i)}, c^{(i)}; \phi)$,
where $c^{(i)}$ is some external features for the $i$th trajectory, e.g., the goal position, obstacle information, etc. Such a shared parameterization can leverage correlations between demonstrations to further simplify the policy learning sub-problem by smoothing noise. In our experiment, we use a similar parameterization with a neural network for each dimension: 
\begin{equation}\small\label{eq:traj_net}
    \psi_\phi^{(i)}(\tau) := \begin{bmatrix}
    \psi_1(\tau, \texttt{fk}(\q_0^{(i)}), c^{(i)}; \phi_1) & 
    \cdots &
    \psi_d(\tau, \texttt{fk}(\q_0^{(i)}), c^{(i)}; \phi_d)
    \end{bmatrix}^\t,
\end{equation}
where $\texttt{fk}$ is the forward kinematic mapping that maps joint angles to end-effector position and orientation (represented by quaternions), and $\phi_j$ is the parameter of the neural network for the $j$th dimension for $j=1,\ldots,d$. For each neural network, we use 256 and 128 hidden units for the first and second hidden layer, respectively.

\textbf{Sensitivity to Policy Capacity}
We train policy networks with hidden layer sizes of $(256, 128, 64)$, $(128, 64, 32)$, and $(64, 32, 16)$ with 100 trajectories for Task $1$ over $6$ random seeds, and the mean testing RMSEs of CoDE are $0.048$, $0.049$, and $0.055$ $rad$, respectively. As the policy capacity decreases, the performance of CoDE decays gracefully.

\textbf{Training details. }
We use a 3-layer fully connected neural network 
with exponential linear units ($\mathrm{elu}$ activation 
to parameterize the policy. 
The number of hidden units for the two hidden layers are $256$, $128$ and $64$, respectively.
We trained BC, BC+Noise and CoDE 
with Adam optimizer~\cite{kingma14arxiv_adam}. We use learning rate of $5\times10^{-3}$ and weight decay rate of $10^{-10}$. The learning rate decreases by a factor of $0.9$ during training if the loss is not decreasing for more than 500 epochs. The maximum number of training epochs is $5\times10^3$, and the training process can be terminated before it reaches the maximum number of epochs if the training loss is not decreasing for over 500 epochs while the learning rate reaches the minimum value $10^{-6}$. We use a batch size of 2000.

\textbf{Compute \& Time.} The total time to generate all reported results was approximately 500 hours on a single desktop machine with $16$ Intel(R) Core(TM) i7-9800X CPU @$3.80$GHz and an NVIDIA GeForce RTX 3090 GPU. This includes training, validation, and testing on six random network initializations for the NN and RMP policy class trained under behavior cloning (BC), behavior cloning with noise injection (BC+Noise), and CoDE, with dataset size ranging in $[13,25,50,100,200,300]$. 
For an NN policy, the training time for BC, BC+Noise, and CoDE for the dataset of size $100$ are $1195s$, $1202s$, and $1258s$, respectively. 
For an RMP policy, the training time for BC, BC+Noise, and CoDE for the dataset of size $100$ are $3150s$, $3175s$, and $4975s$, respectively. 
The training time scales linearly with dataset size. 

\section{Riemannian Motion Policies}\label{sec:rmp}
Here in this appendix, we consider a structured policy called Riemannian Motion Policy (RMP)~\cite{Cheng18rmpflow,Ratliff18riemannian}. We first briefly introduce the RMP framework\footnote{We refer the readers to~\cite{Cheng18rmpflow} for a more formal and comprehensive treatment of the subject. }, then present the experimental results for CoDE learning with the RMP policy class. 

\textbf{Remark: }In this appendix, we try to keep our notations consistent with~\cite{Cheng18rmpflow} and other work in the literature~\cite{li2019multi,Rana20learningRMP,li2021rmp2}. This, however, inevitably causes a few notations to refer to different quantities than in the main text of this manuscript and other appendices. Notably, $\phi$ in this appendix denotes the mapping between configuration space and the task space, while it refers to the parameters of the auxiliary trajectories for the rest of the paper. 

\subsection{Motion Generation for Robotic Systems}

Consider a robot with a $d$-dimensional \textit{configuration space} $\CC$ with \textit{generalized coordinate}\footnote{An example of a generalized coordinate is the joint angles for a $d$-dof robot manipulator. } $\q\in\R^d$. The time-derivative of the generalized coordinate, $\qd$, is commonly referred to as the \emph{generalized velocity}. For the sake of terminology simplicity, we slightly abuse the terminology and call $\q$ and $\qd$ the position and velocity as if the configuration space is Euclidean. We further assume that the system is feedback linearized in such a way that we can control the acceleration $\qdd$ on the configuration space $\CC$, e.g. the joint angular accelerations on a robot manipulator. As is discussed in~\cref{sec:instantiation}, such assumption holds in most torque-driven fully-actuated robotic systems~\cite{siciliano2010robotics}, such as robot manipulators and holonomic mobile robots. 

The task that a robot needs to achieve, on the other hand, is often more convenient to be described on a different space, commonly called the task space $\TT$. For example, goal-reaching can be defined in a 3-d Euclidean space describing the end-effector's position in relation to the goal, and obstacle-avoidance can be treated in the 1-d distance spaces between points along the robot and obstacles in the environment. Complex tasks, e.g., goal reaching while avoiding joint limits and obstacles, usually require the robot to achieve desired behaviors in multiple spaces, each corresponding to a \textit{subtask}, such as goal-reaching, collision avoidance, etc. We called those spaces \emph{subtask spaces}, denoted $\{\TT_k\}_{k=1}^K$, where $K$ is the number of subtasks. Let $\phi_k:\CC\to\TT_k$ be the mapping from the configuration space $\CC$ to subtask space $\TT_k$ (note the overload of $\phi$ in this appendix as discussed in the remark above). For example, when the subtask space $\TT_k$ is the end-effector frame, $\phi_k$ is the forward kinematic mapping. The goal of \textit{motion generation} is to provide a configuration space policy $\qdd =\pi(\q, \qd)$ such that the trajectory in the task space, $\phi_k(\q(t))$, achieves the desired behavior.

\subsection{Riemannian Motion Policies}

The Riemannian Motion Policy (RMP) framework~\cite{Ratliff18riemannian,Cheng18rmpflow} describes each subtask as \emph{1) }a desired behavior in terms of acceleration on the subtask space as well as \emph{2) }a state-dependent importance weight of the subtask policy in relation to other subtasks. The two combined are called a Riemannian Motion Policy (RMP). Concretely, let $\x_k:=\phi_k(\q)\in\R^n$ be the coordinate on the $k$th subtask space, then an RMP $(\ab_k,\M_k)^{\TT_k}$ is a tuple of the acceleration policy $\ab_k:(\x_k,\xd_k)\mapsto \ab_k(\x_k,\xd_k)\in\R^n$ and the importance weight policy (referred to as inertia matrix in~\cite{Cheng18rmpflow}) $\M_k:(\x_k,\xd_k)\mapsto\M_k(\x_k,\xd_k)\in\R^{n\times n}_+$. The state-dependent importance weight allows for more flexibility in shaping the robot behaviors. For example, the importance weight of collision avoidance should be large when the robot is closed to an obstacle and/or is moving fast towards it. Conversely, when the robot is far or is moving away from the obstacle, the weight should be zero or near-zero to avoid interfering with other subtasks.

As is observed by~\cite{li2021rmp2}, the RMP framework solves the following optimization problem to generate the configuration space acceleration policy $\pi(\q,\qd)$:
\begin{equation}\label{eq:rmp-ls}
   \pi(\q,\qd) \; = \; \min_{\ab \in \R^d}  \; \sum_{k=1}^K\,\frac12\,\Big\| \Jb_k \ab + \Jd_k\qd  - \ab_k\Big\|_{\M_k}^2,
\end{equation}
where $\J_k=\frac{\partial\phi_k}{\partial\q}$ is the Jacobian matrix of mapping $\phi_k$ and $\Jd_k$ is the time-derivative of the Jacobian matrix $\J_k$. To understand~\eqref{eq:rmp-ls}, note that
\begin{equation*}
    \frac{d}{dt}\left(\frac{d}{dt} \x_k\right) = \frac{d}{dt}\left(\frac{d}{dt} \phi_k(\q)\right) = \frac{d}{dt}\left(\J_k\qd\right) = \J_k \qdd + \Jd_k\qd.
\end{equation*}
Therefore, the optimization problem~\eqref{eq:rmp-ls} is a least-squares problem defined on the subtask spaces. 

In the literature, there are two efficient strategies for solving the optimization problem~\eqref{eq:rmp-ls}, proposed by  \cite{Cheng18rmpflow} and \cite{li2021rmp2}, respectively. We choose to use the approach by~\cite{li2021rmp2}, which uses automatic-differentiation libraries, e.g., TensorFlow and PyTorch, to compute the closed-form solution to the optimization problem~\eqref{eq:rmp-ls}:
\begin{align} \label{eq:closed-form solution}
    \pi(\q,\qd) = \left( \sum_{k=1}^K \Jb_k^\t \M_k \Jb_k \right)^{\dagger}
    \left( \sum_{k=1}^K \Jb_{k}^\t  \M_k  (\ab_k - \Jd_k\qd)   \right),
\end{align}
where $\dagger$ denotes the Moore–Penrose inverse. This allows us to differentiate through~\eqref{eq:closed-form solution} with automatic-differentiation libraries and learn parameterized RMPs.

\subsection{Parameterization of Riemannian Motion Policies} 

In the experiments (\cref{sec:experiments}), we define the RMP policy class as given by $K=2$ learnable RMPs. The end-effector space RMP encodes the goal-reaching behavior with $\phi_1:\q\mapsto\texttt{fk}(\q)-\x_g$, where \texttt{fk} is the forward kinematics mapping and $\x_g$ is the goal position. The other RMP is defined in the configuration space $\phi_2:\q\mapsto\q$, which serves as a residual policy shaping the configuration space behavior.

We parameterize both RMPs as multi-layer neural networks. For each RMP, we separately parameterize the acceleration policy $\{\ab_k\}_{k=1,2}$ and the importance weight $\{\M_k\}_{k=1,2}$. The acceleration policies are represented by fully-connected neural networks with $\mathrm{elu}$ activation function and two hidden layers of $128$ and $64$ units, respectively. To ensure that the importance weight $\M_k$ is positive definite, we follow the neural network architecture described in~\cite{Rana20learningRMP}: a fully-connected neural network with output dimension $d(d+1)/2$ is used to predict the entries of a lower-triangular matrix $\Lb_k$, which serves as the Cholesky decomposition of the importance weight matrix $\M_k$. Further, a small positive offset of $10^{-5}$ is added to the diagonal entries of $\Lb_k$ to ensure that the diagonal elements are strictly positive. For the experiments, we use $\mathrm{elu}$ activation function with two hidden layers of size $128$ and $64$, respectively. 

\subsection{Experimental Results}\label{sec:rmp_results}
Following the same training procedure in~\cref{sec:experiments}, we train \textsc{BC}, \textsc{BC+Noise}, and \textsc{CoDE} with the structured policy class RMPs on the same dataset for each task, and we report the results evaluated on the same testing set here. We observe that for both Task 1 (\cref{fig:exp_task1_both}) and Task 2 (\cref{fig:exp_task2_both}), the RMP policy generates similar results comparing to the neural network policy in terms of both the \textit{multi-step error accrual} and the \textit{task performance} as described in~\cref{sec:experiments}. One can observe that the RMP policy class offers better error rates than the neural network policy when trained with either the baselines or CoDE. This observation empirically shows the hypothesis that the inductive bias RMPs provides allow for achieving higher performing, more data efficient policies than unstructured policies. For RMP policies, this performance gain is more apparent when trained on fewer demonstrations, which further facilitates data efficiency.
\begin{figure*}[!htb]
    \begin{subfigure}[b]{0.5\textwidth}
        \centering
    	\includegraphics[width=\textwidth]{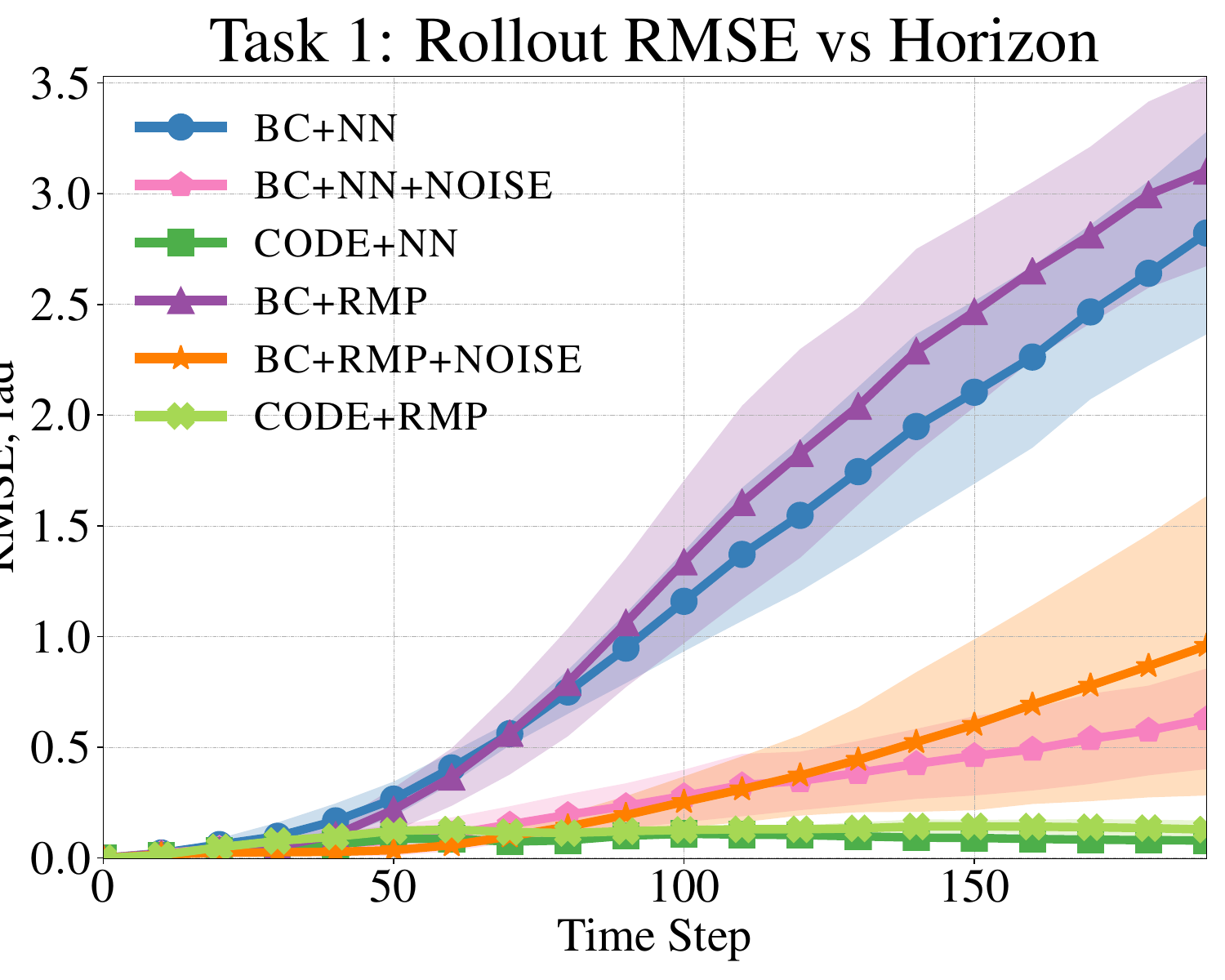}
    	\vspace{-6mm}
        \caption{}
        \vspace{-2mm}
    \end{subfigure}
    \begin{subfigure}[b]{0.5\textwidth}
        \centering
    	\includegraphics[width=\textwidth]{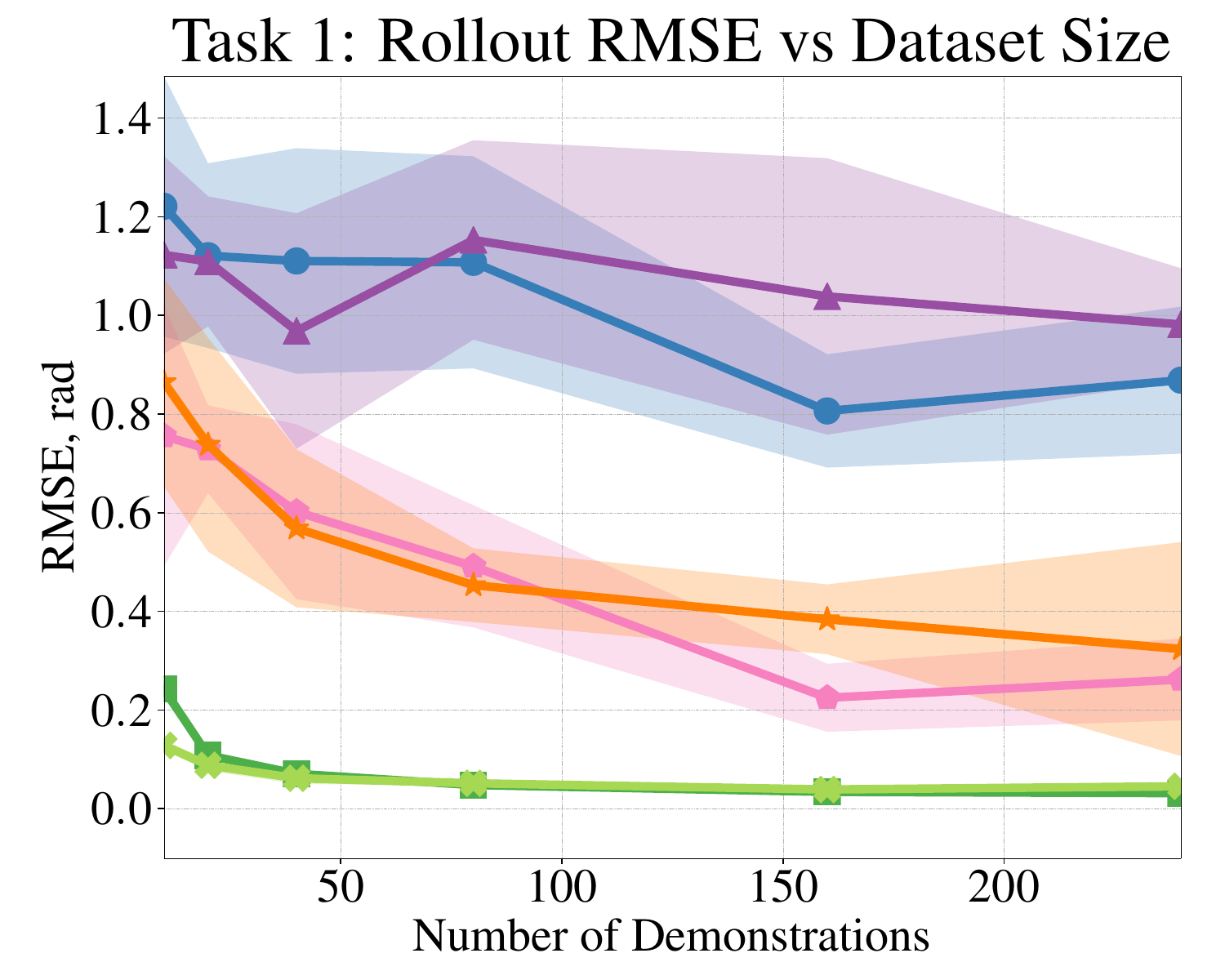}
    	\vspace{-6mm}
        \caption{}
        \vspace{-2mm}
    \end{subfigure}
	\caption{End-effector lifting and random goal reaching (Task 1). 
	(a) State deviation measured in $\ell_2$ norm for each time step during rollout initialized at the starting point of testing trajectories. 
	(b) Rollout RMSE on the reserved testing trajectories with policies trained on $10$ to $240$ trajectories. 
	For (a) and (b), the solid lines correspond to the mean performance evaluated across 6 random training seeds with policies assessed on $50$ testing trajectories. The shaded area is given by mean $\pm $ standard deviation over the random network initializations. 
	}
	\label{fig:exp_task1_both}
\end{figure*}

\begin{figure*}[!htb]
    \begin{subfigure}[b]{0.5\textwidth}
        \centering
    	\includegraphics[width=\textwidth]{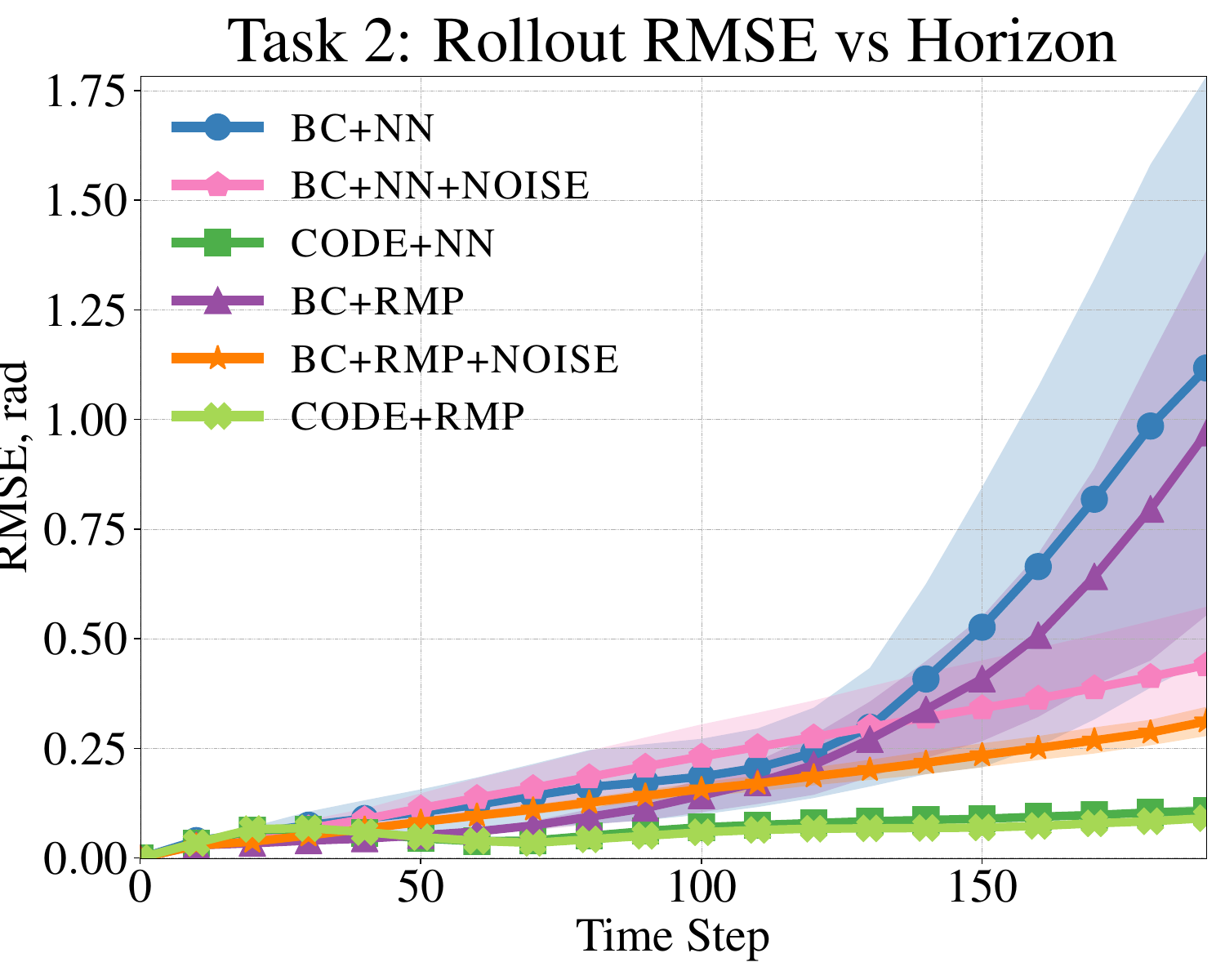}
    	\vspace{-6mm}
    	\caption{}
    	\vspace{-2mm}
    \end{subfigure}
    \begin{subfigure}[b]{0.5\textwidth}
        \centering
    	\includegraphics[width=\textwidth]{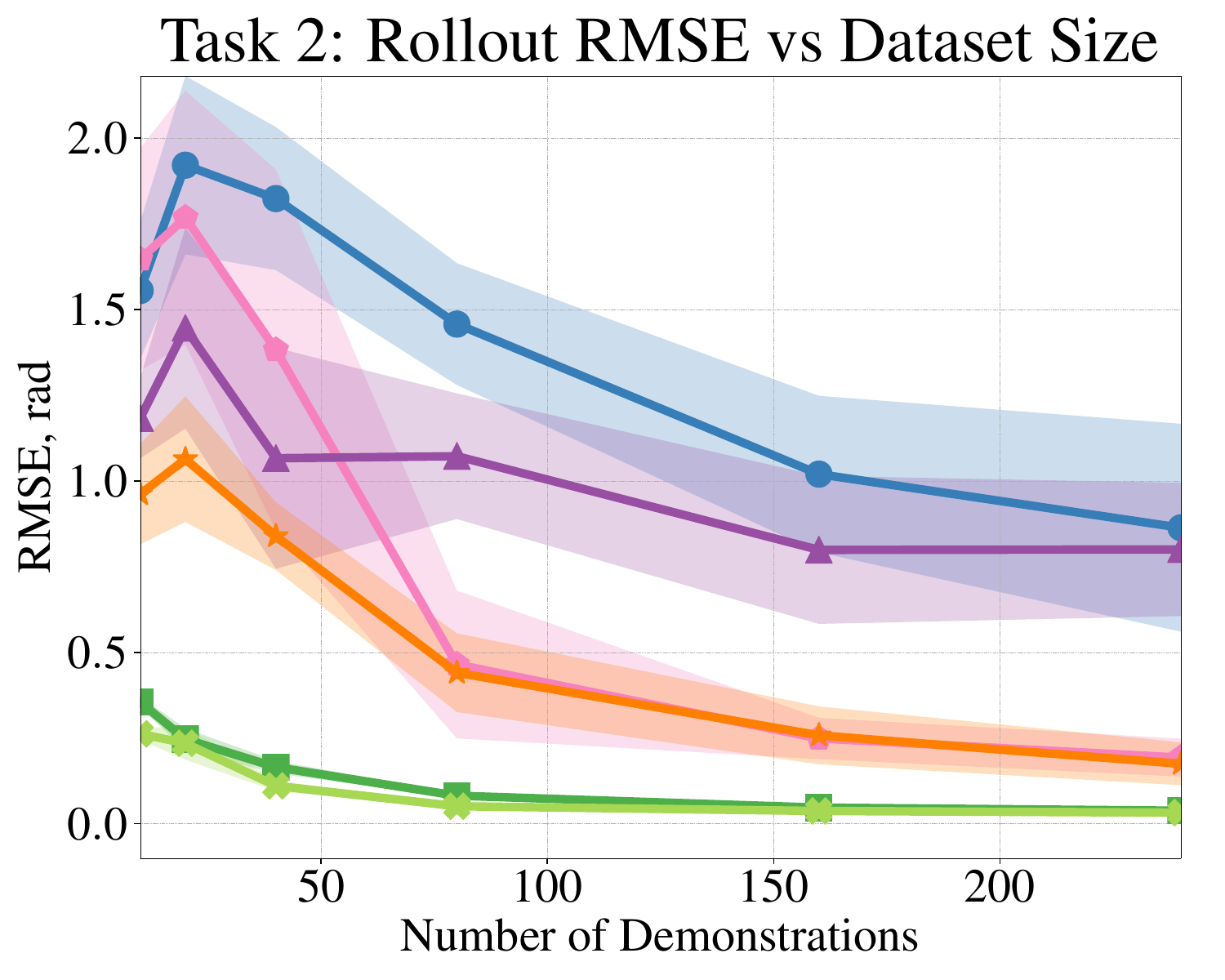}
    	\vspace{-6mm}
    	\caption{}
    	\vspace{-2mm}
    \end{subfigure}
	\caption{
	Random goal reaching around a randomly oriented object (Task 2). (a) State deviation measured in $\ell_2$ norm for each time step during rollout initialized at the starting point of testing trajectories. (b) Rollout RMSE on the testing trajectories with policies trained on $10$ to $240$ trajectories. Same trends are observed as for Task 1~(c.f.~\cref{fig:exp_task1})}
	\vspace{-4mm}
	\label{fig:exp_task2_both}
\end{figure*}

For policies trained with 240 demonstrations, we evaluate the mean target errors (distance between the end-effoctor position at the end of the trajectory and the target position) for both tasks and 
the mean collision rates for Task 2 over 6 random seeds on the testing set. From~\cref{table:target_error_collision_rate}, we observe that the policies learned with \textsc{CoDE} are significantly safer and more effective. Again, one can observe that the RMP policy generates similar results comparing to the neural network policy in terms of both the mean target error and the mean collision rate.

\begin{table}[!htb]
	\caption{Results for Target Errors and Collision Rate.}
	\label{table:target_error_collision_rate}
	\begin{center}
		\begin{tabular}{| c | l | c | c | c | c | c | c |}
			\hline 
			\multicolumn{2}{|c|}{Policy} & \multicolumn{3}{|c|}{NN} & \multicolumn{3}{|c|}{RMP} \\
			\hline
			\multicolumn{2}{|c|}{Algorithm} & BC & BC+NOISE & CODE & BC & BC+NOISE & CODE \\
			\hline
			Task1    
			& Target Error (m)& 0.89 & 0.41 & 0.01 & 0.79 & 0.43 & 0.01 \\
			\hline 
			\multirow{2}*{Task2}     
			& Target Error (m)& 0.78 & 0.41 & 0.02 & 0.66 & 0.25 & 0.01 \\
			& Collision Rate (\%) & 49.0 & 38.3 & 5.3 & 64.3 & 24.5 & 5.0 \\
			\hline 
		\end{tabular}
	\end{center}
\end{table}

\section{Comparison with Independent Parameterization of Auxiliary Trajectories}\label{sec:traj_net_individual}

Traditionally, collocation methods parameterize each auxiliary trajectory independently~\cite{Hargraves87jgcd_collocation,Hereid16icra_collocationHumanoid,Posa16icra_collocationKinematic}, i.e., use a separate set of parameters $\phi^{(i)}$ for each auxiliary trajectory $\{\Tilde{s}_{t,\phi}^{(i)}\}$. We, however, jointly parameterize \emph{all} auxiliary trajectories with one set of parameters $\phi$, and provide the initial configuration $\q_0^{(i)}$ and external features $c^{(i)}$ as input to the neural network so that the network can output $N$ auxiliary trajectories given the initial configuration and external features of the $N$ demonstrations (see \eqref{eq:traj_net} in~\cref{sec:trajectory_net}). 

We choose this joint parameterization for mainly two reasons. First, the joint parameterization is easier to implement for mini-batch training, as there is no need to identify the index of the auxiliary trajectories. Second, the number of parameters in the joint parameterization is independent of the number of demonstrations, where as it is linear for the independent parameterization. In this appendix, we empirically show that, \textit{the joint parameterization has similar performance as the traditional independent parameterization while being more convenient to implement and more parameter efficient}. 

\textbf{Independent Parameterization:} For comparison, we introduce an independent auxiliary trajectory parameterization which uses an individual neural network to represent each auxiliary trajectory:
\begin{equation}\small\label{eq:traj_net_independent}
    \psi_{\phi^{(i)}}^{(i)}(\tau) := \psi(\tau; \phi^{(i)}),
\end{equation}
where $\phi^{(i)}$ is the parameter for the $i$th auxiliary trajectory. Note that since a different set of parameters is used for each trajectory, the external features $c^{(i)}$ and initial configuration $\q_0^{(i)}$ are no longer needed to distinguish between trajectories.  

Similar to~\cref{sec:trajectory_net}, we use a set of 3-layer neural networks with $\tanh$ activation functions to parameterize each auxiliary trajectory. We then obtain the positional trajectories $\{\rho_{\phi^{(i)}}^{(i)}\}_{i=1}^N$ through~\eqref{eq:traj_spline} and sample states and actions from them. For the independent parameterization, we use \textit{a smaller neural network} with $16$ and $8$ units for the first and the second hidden layer, as it only needs to encode a single trajectory. For bookkeeping, from now on, we call the independent parameterization~\eqref{eq:traj_net_independent} as the \textit{multi-neural-net} parameterization and the joint parameterization~\eqref{eq:traj_net} as the \textit{single-neural-net} parameterization. 

\begin{figure}[!htb]
	\centering
	\includegraphics[width=0.5\linewidth]{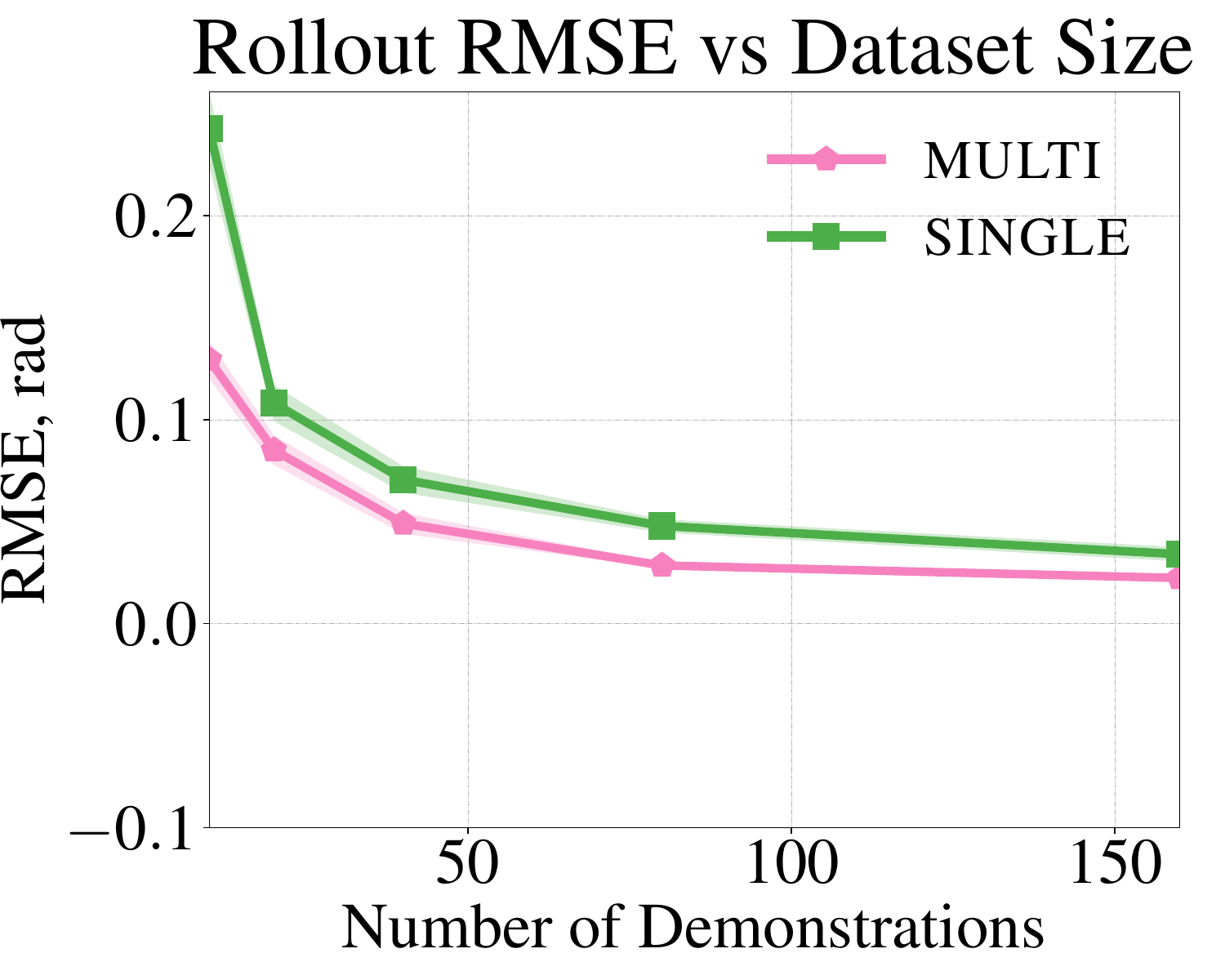}
	\caption{Rollout mean RMSE on the testing set with policies trained on 10 to 160 trajectories for Task $1$ over $6$ random seeds. Policies trained with both parameterizations achieve small error with datasets of size greater than 20. When trained with datasets of less than 20 trajectories, the learned policies under {single-neural-net} parameterization perform better.
	}
	\label{fig:single_vs_multi}
\end{figure}

\textbf{Results:} We now compare the results of both trajectory parameterizations to assess changes in performance. We focus on examining the generalization capability of CoDE to trajectories not seen during training under each auxiliary trajectory parameterization. 
We train the neural network policy with CoDE on 10 to 160 trajectories for Task $1$ over $6$ random seeds with the same training setup as listed in~\cref{sec:experiments}, and test it on the same set of 50 testing trajectories. As shown in Fig.~\ref{fig:single_vs_multi}, policies obtained with CoDE under both \textit{single-neural-net} and \textit{multi-neural-net} parameterizations achieve small error among trajectories with 20 or more demonstrations. When the number of demonstrations is less than 20, the \textit{multi-neural-net} parameterization performs better than the \textit{single-neural-net} parameterization. The better performance of the \textit{multi-neural-net} parameterization gives us an additional way to improve the performance of CoDE.


\section{Details of the Data Collection Setup}\label{sec:data_collection}

We give some details on the expert system from which we collected the demonstration data. As briefly described in Section~\ref{sec:experiments}, our expert is a state machine driven RMP system. We focus primarily on what it means to be driven by a state machine and what state machine we use. The RMP implementation we used matches that described by \cite{Cheng18rmpflow} in their appendix. At a high-level, that RMP setup implements a low-level motion generation system that drives the end-effector to targets while avoiding obstacles. We implement the expert by sequencing targets for the end-effector, either moving directly toward the target or approaching the target from a particular approach direction.

We implement approaching the target from a specified approach direction using a simple rule for defining a target offset as a function of orthogonal distance to the approach line. Let $\vv$ be the normalized approach direction with $\|\vv\| = 1$ defining the direction along which we should approach target $\x_g$. The approach behavior is defined by two parameters, a standoff length $l>0$ and a {\em standoff funnel} standard deviation parameter $\sigma>0$ defining the rule for how the standoff target $\x_o$ descends toward the true target $\x_g$. The rule is defined as follows
\begin{align}
    \x_o = \eta \x_g - (1-\eta) l\vv
    \ \ \ 
    \mbox{where}\ \ \
    \eta = \exp\left(-\frac{1}{2\sigma^2}\delta\x^\t\Big(\I-\vv\vv^\t\Big)\delta\x\right)
\end{align}
with $\delta\x = \x_g - \x$. In words, $\eta$ is a radial basis function of the distance from end-effector position $\x$ to the line defined by $\x_g$ and $\vv$ using standard deviation $\sigma$. When that orthogonal distance is large $\eta\approx 0$, and when that orthogonal distance is small $\eta\approx 1$. So far away, $\x_o$ is offset by length $l$ backward along $\vv$, but $\x_o$ starts descending toward $\x_g$ as the system approaches the target.

We implement the behavior using these two primitives (moving toward a target and moving toward the target along a particular approach direction). Each expert motion uses just two states:
\begin{enumerate}
    \item Lifting: Starting from the tabletop, the system sends the end-effector toward a point directly above it offset approximately $3h$ where $h>0$ is the desired height. Rather than waiting for the system to reach that point, it transitions from this state when simply it has reached a height of $h$. This prevents the system from converging to a stop at the target.
    \item Approaching: Now that it is at least $h$ meters above the table, we can send it to the target $\x_g$ approaching from above $\vv = (0,0,-1)$ using a standoff distance of $l = h$ and approach funnel standard deviation $\sigma$.
\end{enumerate}

For these experiments we could ignore end-effector orientation constraints since the default configuration automatically postured the arm so that the end-effector remained level to the surface on these problems.

\end{document}